\documentclass{article}

\usepackage{style/iclr2023_conference,times}
\bibliographystyle{style/iclr2023_conference}
\iclrfinalcopy

\usepackage{amsmath,amssymb}
\usepackage{amsthm}
\newtheorem{proposition}{Proposition}
\newtheorem*{proposition*}{Proposition}
\newtheorem*{remark*}{Remark}
\newtheorem{lemma}{Lemma}


\DeclareRobustCommand{\mb}[1]{\boldsymbol{#1}}

\DeclareMathOperator*{\argmin}{argmin}
\DeclareMathOperator*{\Tr}{Tr}

\renewcommand{\mid}{~\vert~}

\newcommand{\mbd}{\mb{d}}

\newcommand{\mbx}{\mb{x}}
\newcommand{\mby}{\mb{y}}
\newcommand{\mbz}{\mb{z}}

\newcommand{\mbA}{\mb{A}}
\newcommand{\mbB}{\mb{B}}

\newcommand{\mbD}{\mb{D}}

\newcommand{\mbI}{\mb{I}}

\newcommand{\mbP}{\mb{P}}
\newcommand{\mbQ}{\mb{Q}}

\newcommand{\mbS}{\mb{S}}
\newcommand{\mbT}{\mb{T}}
\newcommand{\mbU}{\mb{U}}
\newcommand{\mbV}{\mb{V}}

\newcommand{\mbX}{\mb{X}}

\newcommand{\mbmu}{\mb{\mu}}

\newcommand{\mbPi}{\mb{\Pi}}

\newcommand{\mbSigma}{\mb{\Sigma}}

\newcommand{\bbS}{\mathbb{S}}

\newcommand{\cB}{\mathcal{B}}

\newcommand{\cD}{\mathcal{D}}
\newcommand{\cE}{\mathcal{E}}

\newcommand{\cG}{\mathcal{G}}

\newcommand{\cN}{\mathcal{N}}
\newcommand{\cO}{\mathcal{O}}
\newcommand{\cP}{\mathcal{P}}

\newcommand{\cS}{\mathcal{S}}
\newcommand{\cT}{\mathcal{T}}

\newcommand{\cW}{\mathcal{W}}

\newcommand{\cZ}{\mathcal{Z}}

\newcommand{\reals}{\mathbb{R}}
\newcommand{\expect}{\mathbb{E}}

\newcommand{\simiid}{\stackrel{\text{i.i.d.}}{\sim}}
\usepackage[hidelinks]{hyperref}
\usepackage{url}
\usepackage{soul}
\usepackage{cleveref}
\usepackage{bm}
\usepackage{graphicx}
\usepackage[makeroom]{cancel}
\usepackage{enumitem}
\usepackage{newfloat}
\DeclareFloatingEnvironment[name={Supplementary Figure}]{suppfigure}
\usepackage[labelfont=bf]{caption}

\usepackage{inconsolata}

\crefname{section}{sec.}{secs.}
\crefname{equation}{equation}{equations}

\title{Representational Dissimilarity Metric Spaces for Stochastic Neural Networks}

\author{Lyndon R. Duong,\textsuperscript{*} Jingyang Zhou\thanks{Equal contribution. \newline Published as a conference paper at ICLR 2023.}, Josue Nassar, Jules Berman, Jeroen Olieslagers, Alex H. Williams \\
Center for Computational Neuroscience, Flatiron Institute, New York, NY, 10010\\
Center for Neural Science, New York University, New York, NY, 10003\\
\texttt{\{lyndon.duong, jingyang.zhou, alex.h.williams\}@nyu.edu}
}

\begin{document}

\maketitle

\begin{abstract}
\noindent
Quantifying similarity between neural representations---e.g. hidden layer activation vectors---is a perennial problem in deep learning and neuroscience research.
Existing methods compare deterministic responses (e.g. artificial networks that lack stochastic layers) or averaged responses (e.g., trial-averaged firing rates in biological data).
However, these measures of \textit{deterministic} representational similarity ignore the scale and geometric structure of noise, both of which play important roles in neural computation.
To rectify this, we generalize previously proposed shape metrics~\citep{Williams_etal_2021_shapes} to quantify differences in \textit{stochastic} representations.
These new distances satisfy the triangle inequality, and thus can be used as a rigorous basis for many supervised and unsupervised analyses.
Leveraging this novel framework, we find that the stochastic geometries of neurobiological representations of oriented visual gratings and naturalistic scenes respectively resemble untrained and trained deep network representations.
Further, we are able to more accurately predict certain network attributes (e.g. training hyperparameters) from its position in stochastic (versus deterministic) shape space.
\end{abstract}

\vspace{-1em}
\section{Introduction}

Comparing high-dimensional neural responses---neurobiological firing rates or hidden layer activations in artificial networks---is a fundamental problem in neuroscience and machine learning~\citep{Dwivedi_2019_CVPR,Chung2021-on}.
There are now many methods for quantifying representational dissimilarity including canonical correlations analysis \citep[CCA;][]{Raghu2017}, centered kernel alignment \citep[CKA;][]{Kornblith2019}, representational similarity analysis \citep[RSA;][]{Kriegeskorte2008_frontiers}, shape metrics \citep{Williams_etal_2021_shapes}, and Riemannian distance \citep{Shahbazi2021} .
Intuitively, these measures quantify similarity in the geometry of neural responses while removing expected forms of invariance, such as permutations over arbitrary neuron labels.


However, these methods only compare \textit{deterministic representations}---i.e. networks that can be represented as a function $f : \cZ \mapsto \reals^n$, where $n$ denotes the number of neurons and $\cZ$ denotes the space of network inputs.
For example, each $\mbz \in \cZ$ could correspond to an image, and $f(\mbz)$ is the response evoked by this image across a population of $n$ neurons (Fig.~\ref{fig:schematics}A).
Biological networks are essentially never deterministic in this fashion.
In fact, the variance of a stimulus-evoked neural response is often larger than its mean~\citep{Goris2014-tm}.
Stochastic responses also arise in the deep learning literature in many contexts, such as in deep generative modeling \citep{Kingma2019-be}, Bayesian neural networks \citep{Wilson2020-case-for-bayesian-dl}, or to provide regularization \citep{Srivastava2014}.

Stochastic networks may be conceptualized as functions mapping each $\mbz \in \cZ$ to a probability distribution, $F(\cdot \mid \mbz)$, over $\reals^n$ (Fig.~\ref{fig:schematics}B,~\citealt{Kriegeskorte2021}).
Although it is easier to study the representational geometry of the average response, it is well understood that this provides an incomplete and potentially misleading picture~\citep{Kriegeskorte2019-xm}.
For instance, the ability to discriminate between two inputs $\mbz, \mbz^\prime \in \cZ$ depends on the overlap of $F(\mbz)$ and $F(\mbz^\prime)$, and not simply the separation of their means (Fig.~\ref{fig:schematics}C-D).
A rich literature in neuroscience has built on top of this insight~\citep{Shadlen1996-dr,Abbott1999-tb,Rumyantsev2020-ar}.
However, to our knowledge, no studies have compared noise correlation structure across animal subjects or species, as has been done with trial-averaged responses.
In machine learning, many studies have characterized the effects of noise on model predictions \citep{Sietsma1991,An1996}, but only a handful have begun to characterize the geometry of stochastic hidden layers \citep{Dapello2021}.

To address these gaps, we formulate a novel class of \textit{metric spaces} over stochastic neural representations.
That is, given two stochastic networks $F_i$ and $F_j$, we construct distance functions $d(F_i, F_j)$ that are symmetric, satisfy the triangle inequality, and are equal to zero if and only if $F_i$ and $F_j$ are equivalent according to a pre-defined criterion.
In the deterministic limit---i.e., as $F_i$ and $F_j$ map onto Dirac delta functions---our approach converges to well-studied metrics over \textit{shape spaces} \citep{Dryden1993,Srivastava2016-fw}, which were proposed by \citet{Williams_etal_2021_shapes} to measure distances between deterministic networks.
The triangle inequality is required to derive theoretical guarantees for many downstream analyses (e.g. nonparametric regression, \citealt{Cover1967}, and clustering, \citealt{Dasgupta2005}).
Thus, we lay an important foundation for analyzing stochastic representations, akin to results shown by \citet{Williams_etal_2021_shapes} in the deterministic case.


\begin{figure}
\includegraphics[width=\textwidth]{./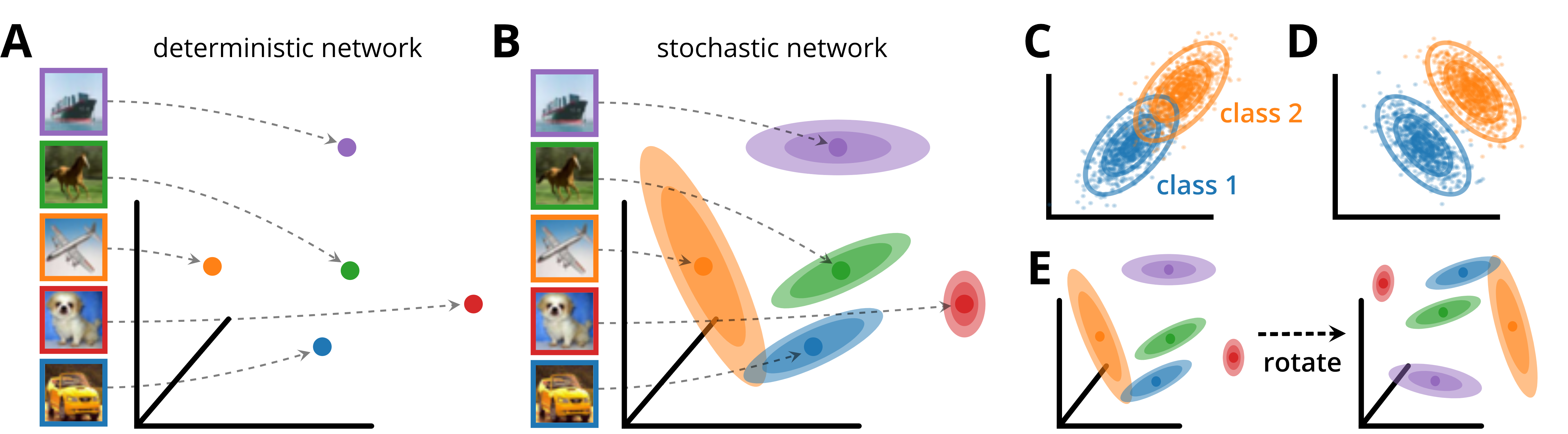}
\caption{
\textit{(A)} Illustration of a deterministic network mapping inputs,  (color-coded images) into points in $\reals^n$.
\textit{(B)} Illustration of a stochastic network, where each input, $\mbz \in \cZ$, maps onto a distribution, $F(\cdot \mid \mbz)$, over $\reals^n$.
\textit{(C)} Example where noise correlations impair discriminability between two image classes.
\textit{(D)} Example where noise correlations improve discriminability~\citep[see][]{Abbott1999-tb}.
\textit{(E)} Illustration of two stochastic networks with equivalent representational geometry.
}
\label{fig:schematics}
\end{figure}
\section{Methods}
\label{sec:methods}

\subsection{Deterministic Shape Metrics}
\label{subsec:methods-deterministic-shape-metrics}

We begin by reviewing how shape metrics quantify representational dissimilarity in the deterministic case.
In the \textit{Discussion} (\cref{sec:discussion}), we review other related prior work.

Let $\{f_1, \hdots, f_K\}$ denote $K$ deterministic neural networks, each given by a function ${f_k : \cZ \mapsto \reals^{n_k}}$.
Representational similarity between networks is typically defined with respect to a set of $M$ inputs, ${\{\mbz_1, \hdots, \mbz_M \} \in \cZ^M}$.
We can collect the representations of each network into a matrix:
\begin{equation}
\label{eq:representations-matrix}
\mbX_k = \begin{bmatrix}
\hspace{.5em} \rule[.5ex]{1.5em}{0.4pt} & f_k(\mbz_1) & \rule[.5ex]{1.5em}{0.4pt} \hspace{.5em} \\
& \vdots & \\
\hspace{.5em} \rule[.5ex]{1.5em}{0.4pt} & f_k(\mbz_M) & \rule[.5ex]{1.5em}{0.4pt} \hspace{.5em} \\
\end{bmatrix}.
\end{equation}
A na\"ive dissimilarity measure would be the Euclidean distance, $\Vert \mbX_i - \mbX_j \Vert_F$.
This is nearly always useless.
Since neurons are typically labelled in arbitrary order, our notion of distance should---at the very least---be invariant to permutations.
Intuitively, we desire a notion of distance such that $d(\mbX_i, \mbX_j) = d(\mbX_i, \mbX_j \mbPi)$ for any permutation matrix, ${\mbPi \in \reals^{n \times n}}$.
Linear CKA and RSA achieve this by computing the dissimilarity between $\mbX_i \mbX_i^\top$ and $\mbX_j \mbX_j^\top$ instead of the raw representations.

\textit{Generalized shape metrics} are an alternative approach to quantifying representational dissimilarity.
The idea is to compute the distance after minimizing over nuisance transformations (e.g. permutations or rotations in $\reals^n$).
Let $\phi_k : \reals^{M \times n_k} \mapsto \reals^{M \times n}$ be a fixed, ``preprocessing function'' for each network and let $\cG$ be a set of nuisance transformations on $\reals^n$.
\citet{Williams_etal_2021_shapes} showed that:
\begin{equation}
\label{eq:deterministic-shape-metric}
d(\mbX_i, \mbX_j) = \min_{\mbT \in \cG} ~ \Vert \phi_i(\mbX_i) - \phi_j(\mbX_j) \mbT \Vert_F
\end{equation}
is a \textit{metric} over equivalent neural representations provided two technical conditions are met.
The first is that $\cG$ is a \textit{group} of linear transformations.
This means that: (a) the identity is in the set of nuisance transformations ($\mbI \in \cG$), (b) every nuisance transformation is invertible by another nuisance transformation (if $\mbT \in \cG$ then $\mbT^{-1} \in \cG$), and (c) nuisance transformations are closed under composition ($\mbT_1 \mbT_2 \in \cG$ if $\mbT_1 \in \cG$ and $\mbT_2 \in \cG$).
The second condition is that every nuisance transformation is an \textit{isometry}, meaning that $\Vert \mbX_i \mbT - \mbX_j \mbT \Vert_F = \Vert \mbX_i - \mbX_j \Vert_F$ for every $\mbT \in \cG$.
Several choices of $\cG$ satisfy these conditions including the permutation group, $\cP$, and the orthogonal group, $\cO$, which respectively correspond to the set of all permutations and rotations on $\reals^n$.

\Cref{eq:deterministic-shape-metric} provides a recipe to construct many notions of distance.
To enumerate some examples, we will assume for simplicity that ${\phi_1 = \hdots = \phi_K = \phi}$ and all networks have $n$ neurons.
Then, to obtain a metric that is invariant to translations and permutations, we can set ${\phi(\mbX) = (1/n) (\mbI - \mb{1}\mb{1}^\top) \mbX}$ and $\cG = \cP(n)$.
If we instead set $\cG = \cO(n)$, we recover the well-known \textit{Procrustes distance}, which is invariant to rotations.
Finally, if we choose $\phi(\cdot)$ to whiten the covariance of $\mbX$, we obtain notions of distance that are invariant to linear transformations and are closely related to CCA.
\citet{Williams_etal_2021_shapes} provides further discussion and examples.


An attractive property of \cref{eq:deterministic-shape-metric} is that it establishes a metric space over deterministic representations.
That is, distances are symmetric $d(\mbX_i, \mbX_j) = d(\mbX_j, \mbX_i)$ and satisfy the triangle inequality $d(\mbX_i, \mbX_k) \leq d(\mbX_i, \mbX_j) + d(\mbX_j, \mbX_k)$.
Further, the distance is zero if and only if there exists a $\mbT \in \cG$ such that $\phi_i(\mbX_i) = \phi_j(\mbX_j) \mbT$.
These fundamental properties are needed to rigorously establish many statistical analyses~\citep{Cover1967,Dasgupta2005}.

\subsection{Stochastic Shape Metrics}
\label{subsec:methods-stochastic-shape-metrics}

Let $\{F_1, \hdots, F_K\}$ denote a collection of $K$ stochastic networks.
That is, each $F_k$ is a function that maps each input ${\mbz \in \cZ}$ to a conditional probability distribution $F_k(\cdot \mid \mbz)$.
How can \cref{eq:deterministic-shape-metric} be generalized to measure representational distances in this case?
In particular, the minimization in \cref{eq:deterministic-shape-metric} is over a Euclidean ``ground metric,'' and we would like to choose a compatible metric over probability distributions.
Concretely, let $\cD(P, Q)$ be a chosen ``ground metric'' between two distributions $P$ and $Q$.
Let $\delta_{\mbx}$ and $\delta_{\mby}$ denote Dirac masses at ${\mbx, \mby \in \reals^n}$ and consider the limit that ${P \rightarrow \delta_{\mbx}}$ and ${Q \rightarrow \delta_{\mby}}$.
In this limit, we seek a ground metric for which $\cD(\delta_{\mbx}, \delta_{\mby})$ is related to $\Vert \mbx - \mby \Vert$.
Many probability metrics and divergences fail to meet this criterion.
For example, if $\mbx \neq \mby$, then the Kullback-Leibler (KL) divergence approaches infinity and the total variation distance and Hellinger distance approach a constant that does not depend on $\Vert \mbx - \mby \Vert$.

In this work, we explored two ground metrics.
First, the \textit{$p$-Wasserstein distance}~\citep{Villani2009}:
\begin{equation}
\label{eq:wasserstein-distance}
\cW_p(P, Q) = (\inf \, \expect  \left[ \Vert X - Y \Vert^p \right])^{1/p}
\end{equation}
where $p \geq 1$, and the infimum is taken over all random variables $(X, Y)$ whose marginal distributions coincide with $P$ and $Q$. Second, the \textit{energy distance}~\citep{Szekely2013-wd}:
\begin{equation}
\label{eq:energy-distance}
\cE_q(P, Q) = (\expect \left[ \Vert X - Y \Vert^q \right] - \tfrac{1}{2}\expect \left[ \Vert X - X^\prime \Vert^q \right] - \tfrac{1}{2} \expect \left[ \Vert Y - Y^\prime \Vert^q \right])^{1/2}
\end{equation}
where $0 < q < 2$ and $X, X^\prime \simiid P$ and $Y, Y^\prime \simiid Q$.
As desired, we have ${\cW_p(\delta_{\mbx}, \delta_{\mby}) = \Vert \mbx - \mby \Vert}$ for any $p$, and ${\cE_q(\delta_{\mbx}, \delta_{\mby}) = \Vert \mbx - \mby \Vert^{q/2}}$.
Thus, when $q=1$ for example, the energy distance converges to the square root of Euclidean distance in the deterministic limit.
Interestingly, when $q=2$, the energy distance produces a deterministic metric on trial-averaged responses (see \cref{subsec:energy-distance-q-equals-two}).

The Wasserstein and energy distances are intuitive generalizations of Euclidean distance.
Both can be understood as being proportional to the amount of energy it costs to transport a pile of dirt (a probability density $P$) to a different configuration (the other density $Q$).
Wasserstein distance is based on the cost of the optimal transport plan, while energy distance is based on the the cost of a random (i.e. maximum entropy) transport plan (see Supp. Fig.~\ref{suppfig:stochastic-shape-metric-schematic}, and \citealt{Feydy2019-dn}).

Our main proposition shows that these two ground metrics can be used to generalize \cref{eq:deterministic-shape-metric}.


\begin{proposition}[Stochastic Shape Metrics]
\label{proposition:stochastic-shape-metric}
Let $Q$ be a distribution on the input space.
Let $\phi_1, \hdots, \phi_K$ be measurable functions mapping onto $\reals^n$ and let $F_i^\phi = F_i \circ \phi^{-1}_i$.
Let $\cD^2$ denote the squared ``ground metric,'' and let $\cG$ be a group of isometries with respect to $\cD$.
Then,
\begin{equation}
\label{eq:stochastic-shape-metric}
d(F_i, F_j) = \min_{\mbT \in \cG} ~ \bigg ( \, \underset{\mbz \sim Q}{\expect} \, \bigg [ \cD^2 \Big ( F_i^\phi(\cdot \mid \mbz) , F_j^\phi (\cdot \mid \mbz) \circ \mbT^{-1} \Big ) \bigg ] \bigg )^{1/2}
\end{equation}
defines a metric over equivalence classes, where $F_i$ is equivalent to $F_j$ if and only if there is a $\mbT \in \cG$ such that $F_i^\phi(\cdot \mid \mbz)$ and $F_j^\phi(\cdot \mid \mbz) \circ \mbT^{-1}$ are equal for all $\mbz \in \textrm{supp}(Q)$.
\end{proposition}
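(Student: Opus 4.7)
The plan is to verify the three metric axioms on equivalence classes, with each step lifting the corresponding argument in the deterministic Proposition of \citet{Williams_etal_2021_shapes} by replacing Euclidean distances with the ground metric $\cD$ and taking an $L^2(Q)$ average over inputs. Before anything else, I would check that the relation ``$F_i \sim F_j$ iff there exists $\mbT \in \cG$ with $F_i^\phi(\cdot\mid\mbz) = F_j^\phi(\cdot\mid\mbz) \circ \mbT^{-1}$ for all $\mbz \in \supp(Q)$'' is an equivalence relation: reflexivity uses $\mbI \in \cG$, symmetry uses $\mbT^{-1} \in \cG$, and transitivity uses closure of $\cG$ under composition.

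Symmetry of $d$ is the easiest axiom. Reparameterising $\mbT \mapsto \mbT^{-1}$ in the outer minimum and using that pushforward by any $\mbT \in \cG$ is a $\cD$-isometry on probability measures, I can rewrite $\cD^2(F_i^\phi(\cdot\mid\mbz), F_j^\phi(\cdot\mid\mbz) \circ \mbT^{-1})$ as $\cD^2(F_j^\phi(\cdot\mid\mbz), F_i^\phi(\cdot\mid\mbz) \circ \mbT)$ pointwise in $\mbz$, then take the minimum on both sides. Identity of indiscernibles follows from non-negativity of $\cD$: if $d(F_i, F_j) = 0$ and $\mbT^{*}$ attains the infimum, then $\cD(F_i^\phi(\cdot\mid\mbz), F_j^\phi(\cdot\mid\mbz) \circ (\mbT^{*})^{-1}) = 0$ for $Q$-a.e.\ $\mbz$, and since $\cD$ is a metric this is exactly the equivalence relation. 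If $\cG$ is not compact the infimum need not be attained, but a standard $\varepsilon$-net argument on a minimising sequence recovers the equivalence-class statement.

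The triangle inequality is the substantive step. Let $\mbT_{ij}$ and $\mbT_{jk}$ be (near-)optimisers for $d(F_i, F_j)$ and $d(F_j, F_k)$, and take as candidate $\mbT = \mbT_{ij} \mbT_{jk} \in \cG$, where the order is chosen so that the middle term's $\mbT_{ij}^{-1}$ cancels after pushforward, leaving $\mbT_{jk}^{-1}$. Applying the pointwise triangle inequality for $\cD$ gives, at each $\mbz$,
\begin{equation*}
\cD\!\left(F_i^\phi, F_k^\phi \circ \mbT^{-1}\right) \le \cD\!\left(F_i^\phi, F_j^\phi \circ \mbT_{ij}^{-1}\right) + \cD\!\left(F_j^\phi \circ \mbT_{ij}^{-1}, F_k^\phi \circ \mbT^{-1}\right).
\end{equation*}
The isometry property collapses the last summand to $\cD(F_j^\phi, F_k^\phi \circ \mbT_{jk}^{-1})$ after pushing forward both arguments by $\mbT_{ij}^{-1}$, since $\mbT^{-1} \circ \mbT_{ij} = \mbT_{jk}^{-1}$. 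Squaring, integrating against $Q$, and invoking Minkowski's inequality in $L^2(Q)$ then yields
\begin{equation*}
\left(\,\underset{\mbz \sim Q}{\expect}\,\cD^2\!\left(F_i^\phi, F_k^\phi \circ \mbT^{-1}\right)\right)^{1/2} \le d(F_i, F_j) + d(F_j, F_k),
\end{equation*}
and the left-hand side upper-bounds $d(F_i, F_k)$ because $\mbT \in \cG$ is merely one feasible choice in the outer minimisation.

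The aspects I expect to require the most care are: (i) keeping the order of composition straight, because the ``$\circ \mbT^{-1}$'' pushforward notation makes products read in reverse; (ii) measurability of $\mbz \mapsto \cD(F_i^\phi(\cdot\mid\mbz), F_j^\phi(\cdot\mid\mbz) \circ \mbT^{-1})$, which is needed to make sense of the expectation under $Q$ but is standard for both the Wasserstein and energy ground metrics; and (iii) the passage from near-optimisers to the actual infimum when $\cG$ is non-compact, handled by an $\varepsilon$-argument that is routine once the main chain of inequalities is in place.
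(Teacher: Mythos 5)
Your proposal is correct and follows essentially the same route as the paper's proof: the paper simply factors the argument into two lemmas (one showing that minimizing a metric over a group of isometries yields a pseudometric via the composed alignment $\mbT_{xz}\mbT_{zy}$, and one showing that the $L^2(Q)$-average of a metric is a metric via Minkowski's inequality), whereas you interleave these same two steps in a single pass. Your added care about attainment of the minimum for non-compact $\cG$ and the ``$Q$-a.e.'' versus ``all $\mbz \in \supp(Q)$'' distinction goes slightly beyond what the paper spells out, but does not change the argument.
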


Above, we use the notation $P \circ \phi^{-1}$ to denote the pushforward measure---i.e. the measure defined by the function composition, $P(\phi^{-1}(A))$ for a measurable set $A$, where $P$ is a distribution and $\phi$ is a measurable function.
A proof is provided in \cref{supplement:proof-of-main-proposition}.
Intuitively, $\mbT$ plays the same role as in \cref{eq:deterministic-shape-metric}, which is to remove nuisance transformations (e.g. rotations or permutations; see Fig.~\ref{fig:schematics}E).
The functions ${\phi_1, \hdots, \phi_K}$ also play the same role as ``preprocessing functions,'' implementing steps such as whitening, normalizing by isotropic scaling, or projecting data onto a principal subspace.
For example, to obtain a translation-invariant distance, we can subtract the grand mean response from each conditional distribution.
That is, $\phi_k(\mbx) = \mbx - \expect_{\mbz \sim Q}[ \expect_{\mbx \sim F_k(\mbx \mid \mbz)} [ \mbx ]]$.

\subsection{Practical Estimation of Stochastic Shape Metrics}
\label{subsec:methods-estimation-procedures}

Stochastic shape distances (eq.~\ref{eq:stochastic-shape-metric}) are generally more difficult to estimate than deterministic distances (eq.~\ref{eq:deterministic-shape-metric}).
In the deterministic case, the minimization over ${\mbT \in \cG}$ is often a well-studied problem, such as linear assignment~\citep{Burkard2012} or the orthogonal Procrustes problem~\citep{Gower2004}.
In the stochastic case, the conditional distributions $F_k(\cdot \mid \mbz)$ often do not even have a parametric form, and can only be accessed by drawing samples---e.g. by repeated forward passes in an artificial network.
Moreover, Wasserstein distances suffer a well-known curse of dimensionality: in $n$-dimensional spaces, the plug-in estimator converges at a very slow rate proportional to $s^{-1/n}$, where $s$ is the number of samples~\citep{Weed2022}.

Thus, to estimate shape distances with Wasserstein ground metrics, we assume that, $F_i^\phi(\cdot \mid \mbz)$, is well-approximated by a Gaussian for each ${\mbz \in \cZ}$.
The 2-Wasserstein distance has a closed form expression in this case~(Remark 2.31 in \citealt{Peyre2019-vo} and Theorem 1 in \citealt{Bhatia2019}): 
\begin{equation}
\label{eq:wasserstein-gaussian-form}
\cW_2(\cN(\mbmu_i, \mbSigma_i), \cN(\mbmu_j, \mbSigma_j)) = \big ( \Vert \mbmu_i - \mbmu_j \Vert^2 + \min_{\mbU \in \cO(n)} \Vert \mbSigma_i^{1/2} - \mbSigma_j^{1/2} \mbU \Vert_F^2 \big )^{1/2}
\end{equation}
where $\cN(\mbmu, \mbSigma)$ denotes a Gaussian density.
It is important not to confuse the minimization over $\mbU \in \cO(n)$ in this equation with the minimization over nuisance transformations, $\mbT \in \cG$, in the shape metric (eq.~\ref{eq:stochastic-shape-metric}).
These two minimizations arise for entirely different reasons, and the Wasserstein distance is not invariant to rotations.
Intuitively, we can estimate the Wasserstein-based shape metric by minimizing over $\mbU \in \cO(n)$ and $\mbT \in \cG$ in alternation (for full details, see \cref{subsec:practical-estimation-gaussian-case}).

Approximating $F_i^\phi(\cdot \mid \mbz)$ as Gaussian is common in neuroscience and deep learning~\citep{Kriegeskorte2021,Wu2019}.
In biological data, we often only have enough trials to estimate the first two moments of a neural response, and one may loosely appeal to the principle of maximum entropy to justify this approximation~\citep{Uffink-maxent}.
In certain artificial networks the Gaussian assumption is satisfied exactly, such as in variational autoencoders (see \cref{subsec:results-vae}).
Finally, even if the Gaussian assumption is violated, \cref{eq:wasserstein-gaussian-form} can still be a reasonable ground metric that is only sensitive to the first two moments (mean and covariance) of neural responses (see \cref{subsec:gaussian-violation-supplemental-discussion}).

The Gaussian assumption is also unnecessary if we use the energy distance (eq.~\ref{eq:energy-distance}) as the ground metric instead of Wasserstein distance.
Plug-in estimates of this distance converge at a much faster rate in high-dimensional spaces \citep{Gretton2012,Sejdinovic2013}.
In this case, we propose a two-stage estimation procedure using iteratively reweighted least squares~\citep{Kuhn1973-nk}, followed by a ``metric repair'' step~\citep{Brickell2008} which resolves small triangle inequality violations due to distance estimation error (see \cref{{subsec:practical-estimation-energy-distance}} for full details).

We discuss computational complexity in Appendix \ref{subsec:w2-algorithmic-complexity} and provide user-friendly implementations of stochastic shape metrics at:
\url{github.com/ahwillia/netrep}.

\subsection{Interpolating Between Mean-Sensitive and Covariance-Sensitive Metrics}
\label{subsec:methods-interpolation}

An appealing feature of the 2-Wasserstein distance for Gaussian measures (eq.~\ref{eq:wasserstein-gaussian-form}) is its decomposition into two terms that respectively depend on the mean and covariance.
We reasoned that it would be useful to isolate the relative contributions of these two terms.
Thus, we considered the following generalization of the 2-Wasserstein distance parameterized by a scalar, $0 \leq \alpha \leq 2$:
\begin{equation}
\label{eq:interpolated-wasserstein}
\overline{\cW}_2^\alpha(P_i, P_j) = \big ( \, \alpha \Vert \mbmu_i - \mbmu_j \Vert^2 +  (2-\alpha)  \min_{\mbU \in \cO(n)} \Vert \mbSigma_i^{1/2} - \mbSigma_j^{1/2} \mbU \Vert_F^2 \, \big )^{1/2}
\end{equation}
where $P_i, P_j$ are distributions with means $\mbmu_i, \mbmu_j$ and covariances $\mbSigma_i, \mbSigma_j$.
In \Cref{sec:interpolated-metric-supplement} we show that this defines a metric and, by extension, a shape metric when plugged into \cref{eq:stochastic-shape-metric}.

We can use $\alpha$ to interpolate between a Euclidean metric on the mean responses and a metric on covariances known as the Bures metric \citep{Bhatia2019}.
When $\alpha=1$ and the distributions are Gaussian, we recover the 2-Wasserstein distance.
Thus, by sweeping $\alpha$, we can utilize a spectrum of stochastic shape metrics ranging from a distance that isolates differences in trial-average geometry ($\alpha=2$) to a distance that isolates differences in noise covariance geometry ($\alpha=0$).
Distances along this spectrum can all be understood as generalizations of the usual ``earth mover'' interpretation of Wasserstein distance---the covariance-insensitive metric ($\alpha = 2$) only penalizes transport due to differences in the mean while the mean-insensitive metric ($\alpha = 0$) only penalizes transport due to differences in the orientation and scale of covariance.
Simulation results in Supp. Figure~\ref{suppfig:rotation} provide additional intuition for the behavior of these shape distances as $\alpha$ is adjusted between 0 to 2. 

\begin{figure}
\includegraphics[width=\textwidth]{./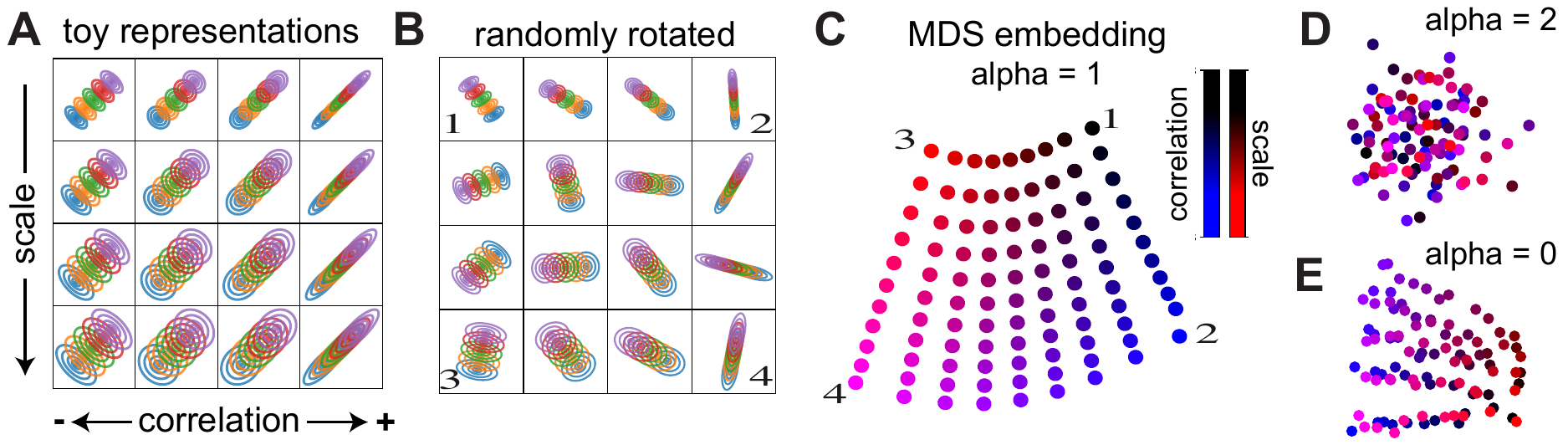}
\caption{
Toy Dataset.
\textit{(A)} 16 out of 99  ``toy networks'' with different correlation structure (horizontal axis) and covariance scale (vertical axis).
Colors indicate distributions conditioned on different network inputs (as in Fig.~\ref{fig:schematics}B).
\textit{(B)} Same as A, with random rotations applied in neural activation space.
These rotated representations are used in subsequent panels.
\textit{(C)} 2D embedding of networks in stochastic shape space ($\alpha=1$ ground metric, $\cG=\cO$).
Numbered points correspond to labeled representations in panel B.
Colormap indicates ground truth covariance parameters.
\textit{(D)} Same as C, but with $\alpha=2$ (covariance-insensitive).
\textit{(E)} Same as C, but with $\alpha=0$ (mean-insensitive).}
\label{fig:toy-dataset}
\end{figure}

\section{Results and Applications}
\label{sec:results}

\subsection{Toy Dataset}
\label{subsec:results-toy-data}

We begin by building intuition on a synthetic dataset in $n=2$ dimensions with $M=5$ inputs.
Each response distribution was chosen to be Gaussian, and the mean responses were spaced linearly along the identity line.
We independently varied the scale and correlation of the covariance, producing a 2D space of ``toy networks.''
\Cref{fig:toy-dataset}A shows a sub-sampled $4 \times 4$ grid of toy networks.
To demonstrate that stochastic shape metrics are invariant to nuisance transformations, we applied a random rotation to each network's activation space (Fig.~\ref{fig:toy-dataset}B).
The remaining panels show analyses for 99 randomly rotated toy networks spaced over a $11 \times 9$ grid (11 correlations and 9 scales).

Because the mean neural responses were constructed to be identical (up to rotation) across networks, existing measures of representational dissimilarity (CKA, RSA, CCA, etc.) all fail to capture the underlying structure of this toy dataset (Supp. Fig.~\ref{suppfig:toy-dataset-baselines}).
In contrast, stochastic shape distances can elegantly recover the 2D space of networks we constructed.
In particular, we computed the $99 \times 99$ pairwise distance matrix between all networks (2-Wasserstein ground metric and rotation invariance, ${\cG = \cO}$) and then performing multi-dimensional scaling~\citep[MDS;][]{Borg2005} to obtain a 2D embedding.
This reveals a 2D grid of networks that maps onto our constructed arrangement (Fig.~\ref{fig:toy-dataset}C).
Again, since the toy networks have equivalent geometries on average, a deterministic metric obtained by setting $\alpha = 2$ in eq.~\ref{eq:interpolated-wasserstein} (covariance-insensitive metric) fails to recover this structure (Fig.~\ref{fig:toy-dataset}D).
Setting $\alpha=0$ in eq.~\ref{eq:interpolated-wasserstein} (mean-insensitive stochastic metric) also fails to recover a sensible 2D embedding (Fig.~\ref{fig:toy-dataset}E), since covariance ellipses of opposite correlation can be aligned by a $90^\circ$ rotation.
Thus, we are only able to fully distinguish the toy networks in \Cref{fig:toy-dataset}A by taking \textit{both the mean and covariance} into account when computing shape distances.

In Supp. Fig.~\ref{suppfig:toy-dataset} we show that using energy distance (eq.~\ref{eq:energy-distance}, $q=1$) as the ground metric produces a similar result.
Similar to Fig.~\ref{fig:toy-dataset}C, MDS visualizations reveal the expected 2D manifold of toy networks.
Indeed, these alternative distances correlate---but do not coincide exactly---with the distances shown in Figure~\ref{fig:toy-dataset} that were computed with a Wasserstein ground metric (Supp. Fig.~\ref{suppfig:toy-dataset}D).

\subsection{Biological Data}
\label{subsec:results-bio}

\begin{figure}
\includegraphics[width=\textwidth]{./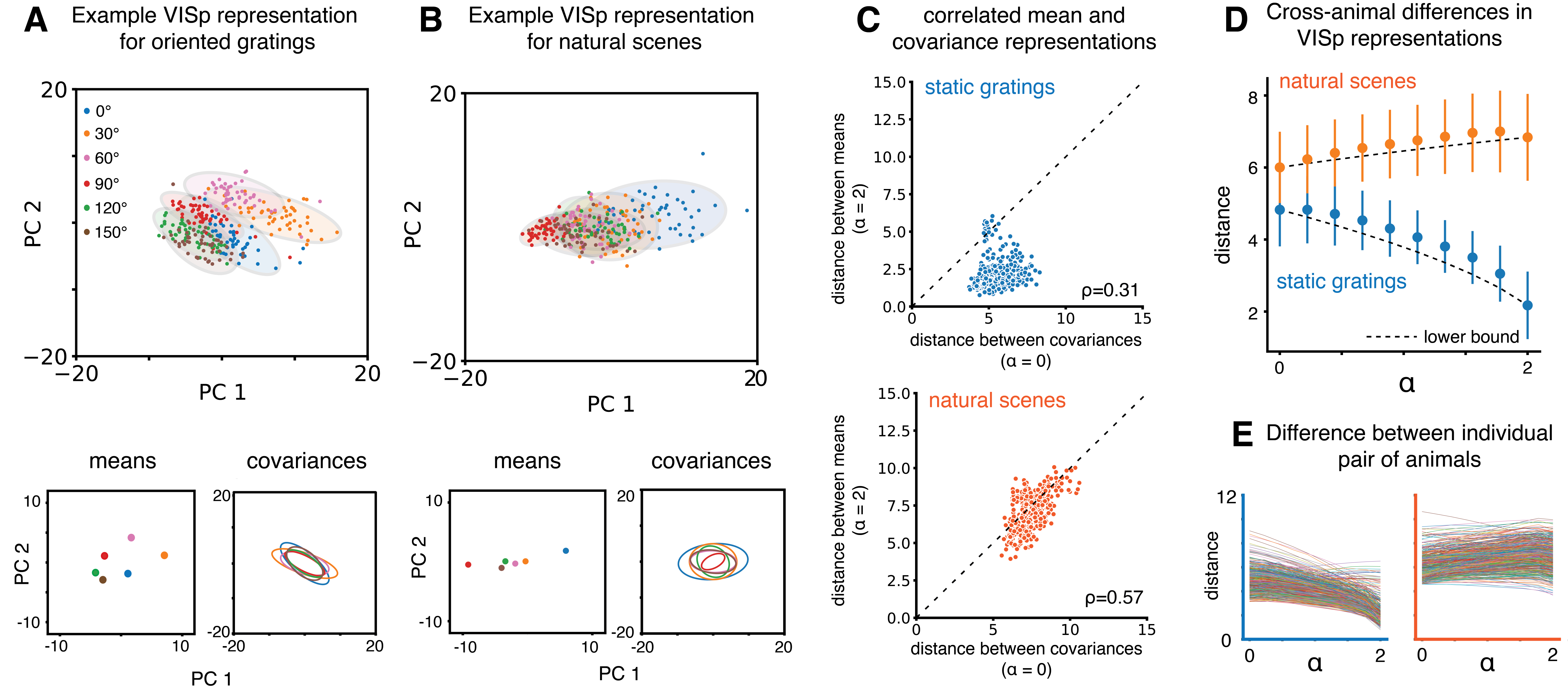}
\caption{
\textit{(A)} In an example mouse VISp, neuronal responses form different means and covariances for six grating orientations.
\textit{(B)} In an example mouse VISp, neuronal responses form different means and covariances for six different natural scenes.
\textit{(C)} Covariance distances dominate differences between recording sessions for artificial grating stimuli, but not for natural scenes.
\textit{(D)} Averaged distance (across sessions) between mean responses for natural scenes is larger compared to for gratings. 
\textit{(E)} Observations in \textit{(C)} generally hold for individual pairs of recording sessions.
}
\label{fig:bio-dataset}
\end{figure}

Quantifying representational similarity is common in visual neuroscience \citep{Shi2019,Kriegeskorte2008_neuron}.
To our knowledge, past work has only quantified similarity in the geometry of trial-averaged responses and has not explored how the population geometry of noise varies across animals or brain regions (e.g. how the scale and shape of the response covariance changes).
We leveraged stochastic shape metrics to perform a preliminary study on primary visual cortical recordings (VISp) from $K=31$ mice in the Allen Brain Observatory.\footnote{See \cref{subsec:allen-brain-observatory-supplement} for full details.
Data are available at: \url{observatory.brain-map.org/visualcoding/}}
The results we present below suggest: (a) across-animal variability in covariance geometry is comparable in magnitude to variability in trial-average geometry, (b) across-animal distances in covariance and trial-average geometry are not redundant statistics as they are only weakly correlated, and (c) the relative contributions of mean and covariance geometry to inter-animal shape distances are stimulus-dependent.
Together, these results suggest that neural response distributions contain nontrivial geometric structure in their higher-order moments, and that stochastic shape metrics can help dissect this structure.

We studied population responses (evoked spike counts, see \cref{subsec:allen-brain-observatory-supplement}) to two stimulus sets: a set of 6 static oriented grating stimuli and a set of 119 natural scene images.
Figure~\ref{fig:bio-dataset}A shows neural responses from one animal to the oriented gratings within a principal component subspace (top), and the isolated mean and covariance geometries (bottom).
Figure~\ref{fig:bio-dataset}B similarly summarizes neural responses to six different natural scenes.
In both cases, the scale and orientation of covariance within the first two PCs varies across stimuli.
Furthermore, the scale of trial-to-trial variance was comparable to across-condition variance in the response means.
These observations can be made individually within each animal, but a stochastic shape metric (2-Wasserstein ground metric, and rotation invariance $\cG = \cO$) enables us to quantify differences in covariance geometry \textit{across animals}.
We observed that the overall shape distance between two animals reflected a mixture of differences in trial-average and covariance geometry.
Specifically, by leveraging \cref{eq:interpolated-wasserstein}, we observed that mean-insensitive ($\alpha = 0$) and covariance-insensitive ($\alpha = 2$) distances between animals have similar magnitudes and are weakly correlated (Fig.~\ref{fig:bio-dataset}C-E).

Interestingly, the ratio of these two distances was reversed across the two stimulus sets---differences in covariance geometry across animals were larger relative to differences in average for oriented gratings, while the opposite was true for natural scenes (Fig.~\ref{fig:bio-dataset}C-E).
Later we will show an intriguingly similar trend when comparing representations between trained and untrained deep networks.

\subsection{Variational autoencoders and latent factor disentanglement}
\label{subsec:results-vae}

\begin{figure}
\centering
\includegraphics[height=2.5in]{./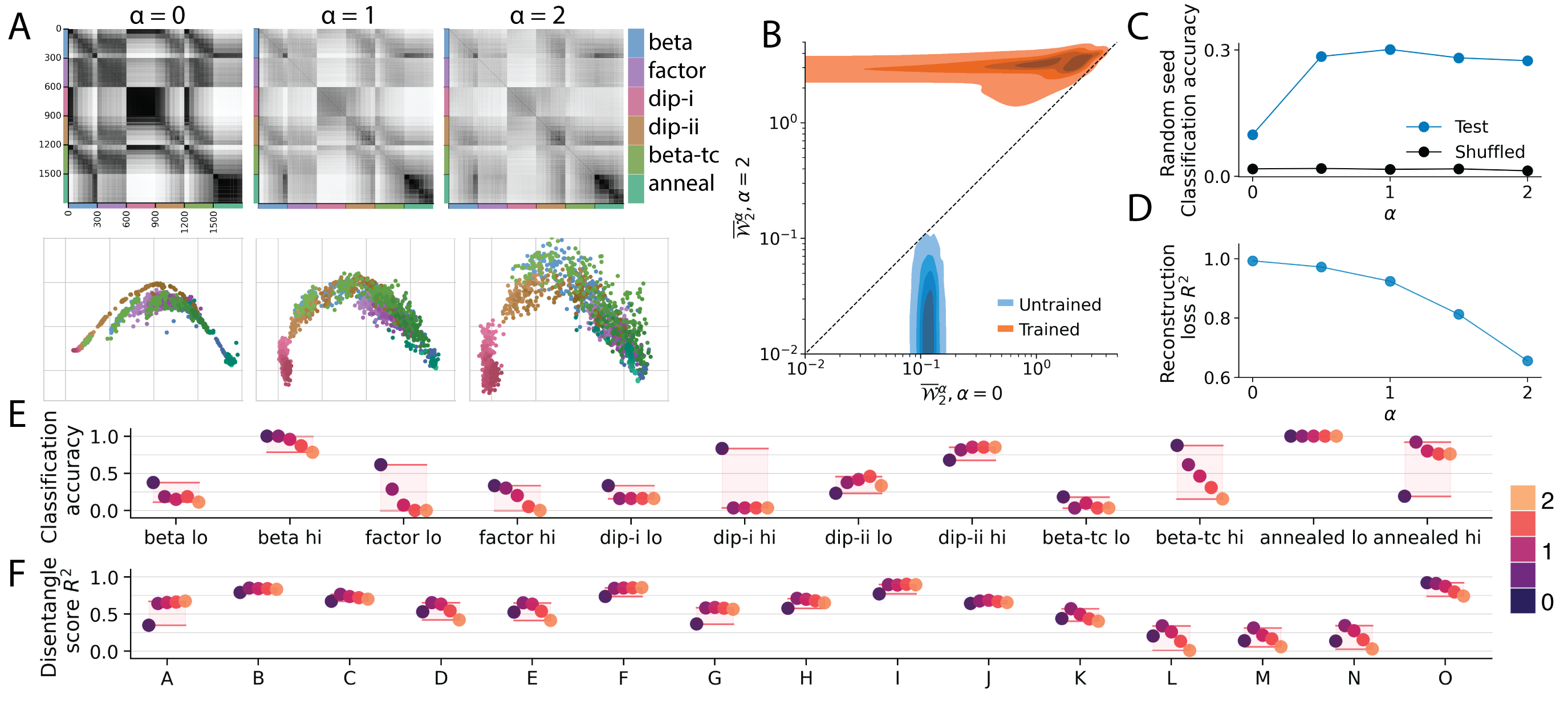}
\caption{
\textit{(A)} Dissimilarity matrices with varying $\alpha$ (top) and corresponding 2D embeddings (bottom) for 1800 VAEs trained on \texttt{dSprites}.
Six different VAE objectives (color hue) were used, each with six possible regularization strengths (tint) repeated over 50 random seeds.
\textit{(B)} Untrained networks were farther apart in mean-insensitive distance ($\alpha=0$) while trained networks were largely separated by covariance-insensitive distance ($\alpha=2$). 
\textit{(C)} $k$NN prediction of a network's random seed.
\textit{(D)} Predicting reconstruction loss using $k$NN regression. 
\textit{(E)} $k$NN prediction accuracy of VAE objective and regularization strength (hi vs lo) for metrics with different $\alpha$ (color scale).
\textit{(F)} Regression performance predicting factor disentanglement scores (see Supp.~\ref{subsec:vae-methods-supplement} for details).
}
\label{fig:vae-results}
\end{figure}

Variational autoencoders~\citep[VAEs;][]{Kingma2019-be} are a well-known class of deep generative models that map inputs, $\mbz \in \cZ$ (e.g. images), onto conditional latent distributions, $F(\cdot \mid \mbz)$, which are typically parameterized as Gaussian.
Thus, for each high-dimensional input $\mbz_i$, the encoder network produces a distribution $\cN(\mbmu_i, \mbSigma_i)$ in a relatively low-dimensional latent space (``bottleneck layer'').
Because of this, VAEs are a popular tool for unsupervised, nonlinear dimensionality reduction~\citep{Higgins2021-uq,Seninge2021,Goffinet2021,Batty_2019}.
However, the vast majority of papers only visualize and analyze the means, ${\{\mbmu_1, \hdots, \mbmu_M\}}$, and ignore the covariances, ${\{\mbSigma_1, \hdots, \mbSigma_M\}}$, generated by these models.
Stochastic shape metrics enable us to compare both the mean and covariance structure learned by different VAEs.
Such comparisons can help us understand how modeling choices impact learned representations~\citep{Locatello2019} and how reproducible or identifiable learned representations are in practice~\citep{Khemakhem2020}.

We studied a collection of 1800 trained networks
spanning six variants of the VAE framework at six regularization strengths and 50 random seeds~\citep{Locatello2019}.
Networks were trained on a synthetic image dataset called \texttt{dSprites}~\citep{dsprites17}, which is a well-established benchmark within the VAE disentanglement literature.
Each image has a set of ground truth latent factors which \citet{Locatello2019} used to compute various disentanglement scores for all networks.

We computed stochastic shape distances between over 1.6 million network pairs, demonstrating the scalability of our framework.
We computed rotation-invariant distances ($\cG = \cO$) for the generalized Wasserstein ground metric ($\alpha=0, 0.5, 1, 1.5, 2$ in equation ~\ref{eq:interpolated-wasserstein}; Fig.~\ref{fig:vae-results}A) and for energy distance ($q=1$ in equation~\ref{eq:energy-distance}; Supp. Fig.~\ref{suppfig:vae_energy}A, Supp.~\ref{subsec:vae_energy_supplement}).
In all cases, different VAE variants visibly clustered together in different regions of the stochastic shape space (Fig.~\ref{fig:vae-results}B, Supp. Fig.~\ref{suppfig:vae_energy}B).
Interestingly, the covariance-insensitive ($\alpha=2$) shape distance tended to be larger than the mean-insensitive ($\alpha=0$) shape distance (Fig.~\ref{fig:vae-results}B), in agreement with the biological data on natural images (Fig.~\ref{fig:bio-dataset}C, bottom).
Even more interestingly, this relationship was reversed in untrained VAEs (Fig.~\ref{fig:vae-results}B), similar to the biological data on artificial gratings (Fig.~\ref{fig:bio-dataset}B, top).
We trained several hundred VAEs on MNIST and CIFAR-10 to confirm these results persisted across more complex datasets (Supp. Fig.~\ref{suppfig:vae_mnist_cifar}; Supp.~\ref{subsec:vae-methods-supplement}).
Overall, this suggests that the ratio of $\alpha=0$ and $\alpha=2$ shape distances may be a useful summary statistic of representational complexity.
We leave a detailed investigation of this to future work.

Since stochastic shape distances define proper metric spaces without triangle inequality violations, we can identify the $k$-nearest neighbors ($k$NN) of each network within this space, and use these neighborhoods to perform nonparametric classification and regression~\citep{Cover1967}.
This simple approach was sufficient to predict most characteristics of a network, including its random seed (Fig.~\ref{fig:vae-results}C), average training reconstruction loss (Fig.~\ref{fig:vae-results}D), its variant of the VAE objective including regularization strength (Fig.~\ref{fig:vae-results}E), and various disentanglement scores (Fig.~\ref{fig:vae-results}F).
Detailed procedures for these analyses are provided in \Cref{subsec:vae-methods-supplement}.
Notably, many of these predictions about network identity (Fig.~\ref{fig:vae-results}E) were \textit{more accurate} for the novel stochastic shape metrics ($0 \leq \alpha < 2$), compared to existing shape metrics ($\alpha=2$, deterministic metric on the mean responses; \citealt{Williams_etal_2021_shapes}).
Similarly, many disentanglement score predictions (Fig.~\ref{fig:vae-results}F) improved when considering both covariance and means together ($ 0 < \alpha < 2 $).

The fact that we can often infer a network's random seed from its position in stochastic shape space suggests that VAE features may have limited interpretability on this dataset.
These limitations appear to apply both to the mean ($\alpha=2$) and the covariance ($\alpha=0$) representational geometries, as well as to intermediate interpolations.
Future work that aims to assess the identifiability of VAE representations may find it useful to use stochastic shape metrics to perform similar analyses.

\subsection{Effects of Patch-Gaussian data augmentation on artificial deep networks}
\label{subsec:results-data-aug}

\begin{figure}
\includegraphics[width=\textwidth, height=1.9in]{./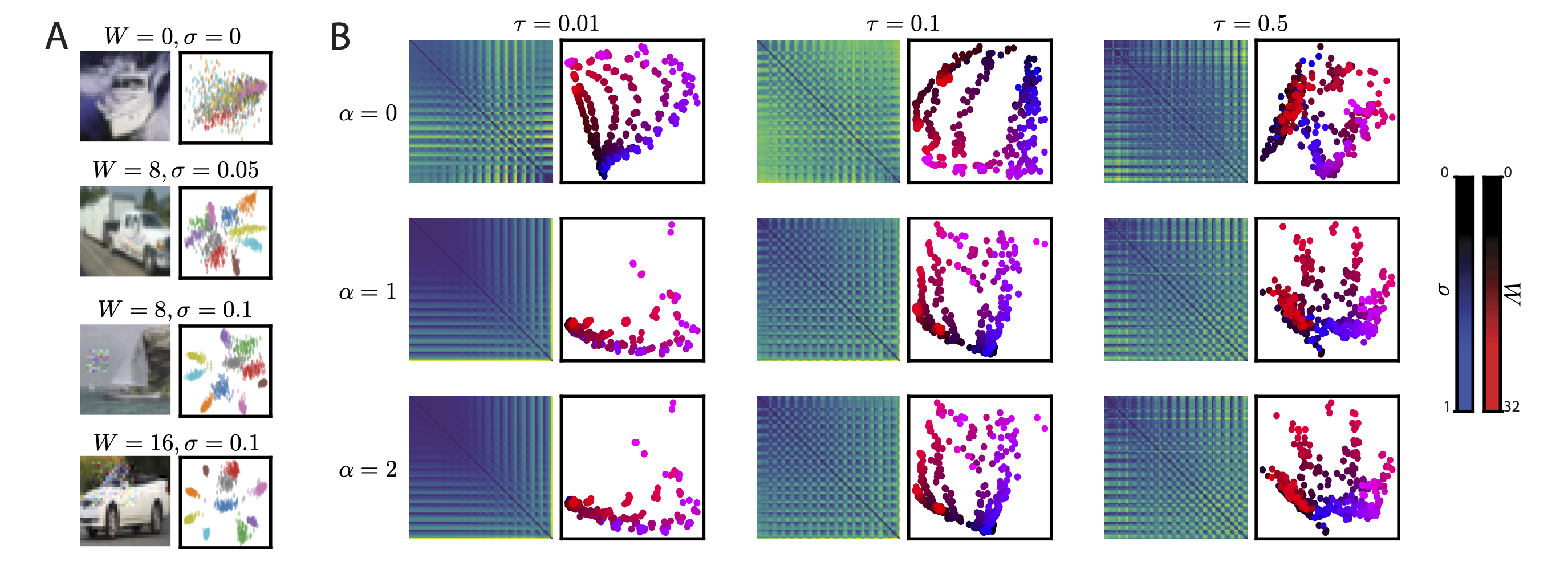}
\caption{
\textit{(A)} \textit{Left}, Patch-Gaussian augmented images for different patch width, $W$, and train-time noise level, $\sigma$.
\textit{Right}, MDS embedded activation vectors from Patch-Gaussian trained networks. 
Colors correspond to a different images, points correspond to independent samples, $\tau=0.1$.
\textit{(B)} Distance matrices (left) and low-dimensional MDS embedding (right) of  networks for different shape distances parameterized by $\alpha$, and different levels of Gaussian corruption at test time, $\tau$.
}
\label{fig:data-aug-results}
\end{figure}
Despite the success of artificial neural networks on vision tasks, they are still susceptible to small input perturbations~\citep{Hendrycks2018}.
A simple and popular approach to induce robustness in deep networks is Patch-Gaussian augmentation~\citep{Lopes2019}, which adds Gaussian noise drawn from $\mathcal{N}(0, \sigma^2)$ to random image patches of width $W$ during training (Fig.~\ref{fig:data-aug-results}A, left column).
At test time, network robustness is assessed with images with spatially uniform noise drawn from $\mathcal{N}(0, \tau^2)$.
Importantly, the magnitude of noise at test time, $\tau$, may be distinct from noise magnitude during training, $\sigma$.
From Fig.~\ref{fig:data-aug-results}A (right column), we see that using Patch-Gaussian augmentation (second-fourth rows) qualitatively leads to more robust hidden layer representations on noisy data compared to networks trained without it (first row).
While Patch-Gaussian augmentation is empirically successful~\citep[for quantitative details, see][]{Lopes2019}, how $W$ and $\sigma$ change hidden layer representations to confer robustness remains poorly understood.

To investigate, we trained a collection of 339 ResNet-18 networks~\citep{he2016deep} on CIFAR-10~\citep{krizhevsky2009learning}, sweeping over 16 values of $W$, 7 values of $\sigma$, and 3 random seeds (see \cref{supp:patch-gauss-supplement} for details).
While the architecture is deterministic, we can consider it to be a stochastic mapping by absorbing the random Gaussian perturbation---parameterized by $\tau$---into the first layer of the network and allowing the stochasticity to percolate through the network.
Representations from a fully connected layer following the final average pooling layer were used for this analysis.
We computed stochastic shape distances across all 57,291 pairs of networks across six values of $\tau$ and three shape metrics parameterized by $\alpha \in \{0, 1, 2\}$ defining the ground metric in \cref{eq:interpolated-wasserstein}.

Sweeping across $\alpha$ and $\tau$ revealed a rich set of relationships across these networks (Fig.~\ref{fig:data-aug-results}B and Supp. Fig.~\ref{suppfig:data_aug_embedding}).
While a complete investigation is beyond the scope of this paper, several points are worthy of mention.
First, the mean-insensitive ($\alpha = 0$, top row) and covariance-insensitive ($\alpha = 2$, bottom row) metrics produce clearly distinct MDS embeddings.
Thus, the new notions of stochastic representational geometry developed in this paper (corresponding to $\alpha=0$) provide new information to existing distance measures (corresponding to $\alpha=2$).
Second, the arrangement of networks in stochastic shape space reflects both $W$ and $\sigma$, sometimes in a 2D grid layout that maps nicely onto the hyperparameter sweep (e.g. $\alpha=0$ and ${\tau=0.01}$).
Networks with the same hyperparameters but different random seeds tend to be close together in shape space.
Third, the test-time noise, $\tau$, also intricately impacts the structure revealed by all metrics.
Finally, embeddings based on 2-Wasserstein metric ($\alpha=1$) qualitatively resemble embeddings based on the covariance-insensitive metric ($\alpha=2$) rather than the mean-insensitive metric ($\alpha = 0$).


\section{Discussion and relation to prior work}
\label{sec:discussion}

We have proposed stochastic shape metrics as a novel framework to quantify representational dissimilarity across networks that respond probabilistically to fixed inputs.
Very few prior works have investigated this issue.
To our knowledge, methods within the deep learning literature like CKA~\citep{Kornblith2019} have been exclusively applied to deterministic networks.
Of course, the broader concept of measuring distances between probability distributions appears frequently.
For example, to quantify distance between two distributions over natural images, Fr\'echet inception distance~\citep[FID;][]{Heusel2017} computes the 2-Wasserstein distance within a hidden layer representation space.
While FID utilizes similar concepts to our work, it addresses a very different problem---i.e., how to compare two stimulus sets within the deterministic feature space of a single neural network, rather than how to compare the feature spaces of two stochastic networks over a single stimulus set.

A select number of reports in neuroscience, particularly within the fMRI literature, have addressed how measurement noise impacts RSA~(an approach very similar to CKA).
\citet{Diedrichsen2020} discuss how measurement noise induces positive bias in RSA distances, and propose approaches to correct for this bias.
Similarly, \citet{Cai2016} propose a Bayesian approach to RSA that performs well in low signal-to-noise regimes.
These papers essentially aim to develop methods that are robust to noise, while we were motivated to directly quantify differences in noise scale and geometric structure across networks.
It is also common to use Mahalanobis distances weighted by inverse noise covariance to compute intra-network representation distances~\citep{Walther2016-se}.
This procedure does not appear to quantify differences in noise structure between networks, which we verified on a simple ``toy dataset'' (compare Fig.~\ref{fig:toy-dataset}C to Supp. Fig.~\ref{suppfig:toy-dataset-baselines}).
Furthermore, the Mahalanobis variant of RSA typically only accounts for a single, stimulus-independent noise covariance.
In contrast, stochastic shape metrics account for noise statistics that change across stimuli.


Overall, our work meaningfully broadens the toolbox of representational geometry to quantify stochastic neural responses.
The strengths and limitations of our work are similar to other approaches within this toolbox.
A limitation in neurobiological recordings is that we only observe a subset of the total neurons in each network.
\citet{Shi2019} document the effect of subsampling neurons on representational geometry.
Intuitively, when the number of recorded neurons is large relative to the representational complexity, geometric features are not badly distorted~\citep{Kriegeskorte2016,Trautmann2019}.
We show that our results are not qualitatively affected by subsampling neurons in Supp. Fig.~\ref{suppfig:subsampling_neurons}.
Another limitation is that representational geometry does not directly shed light on the algorithmic principles of neural computation~\citep{Maheswaranathan2019}.
Despite these challenges, representational dissimilarity measures are one of the few quantitative tools available to compare activations across large collections of complex, black-box models, and will be a mainstay of artificial and biological network analysis for the foreseeable future.



\bibliography{refs}

\clearpage
\appendix

\section*{\centering \textbf{\ul{Appendices}}}

\section{Supplementary Figures}

\begin{suppfigure}[h]
\includegraphics[width=\textwidth]{./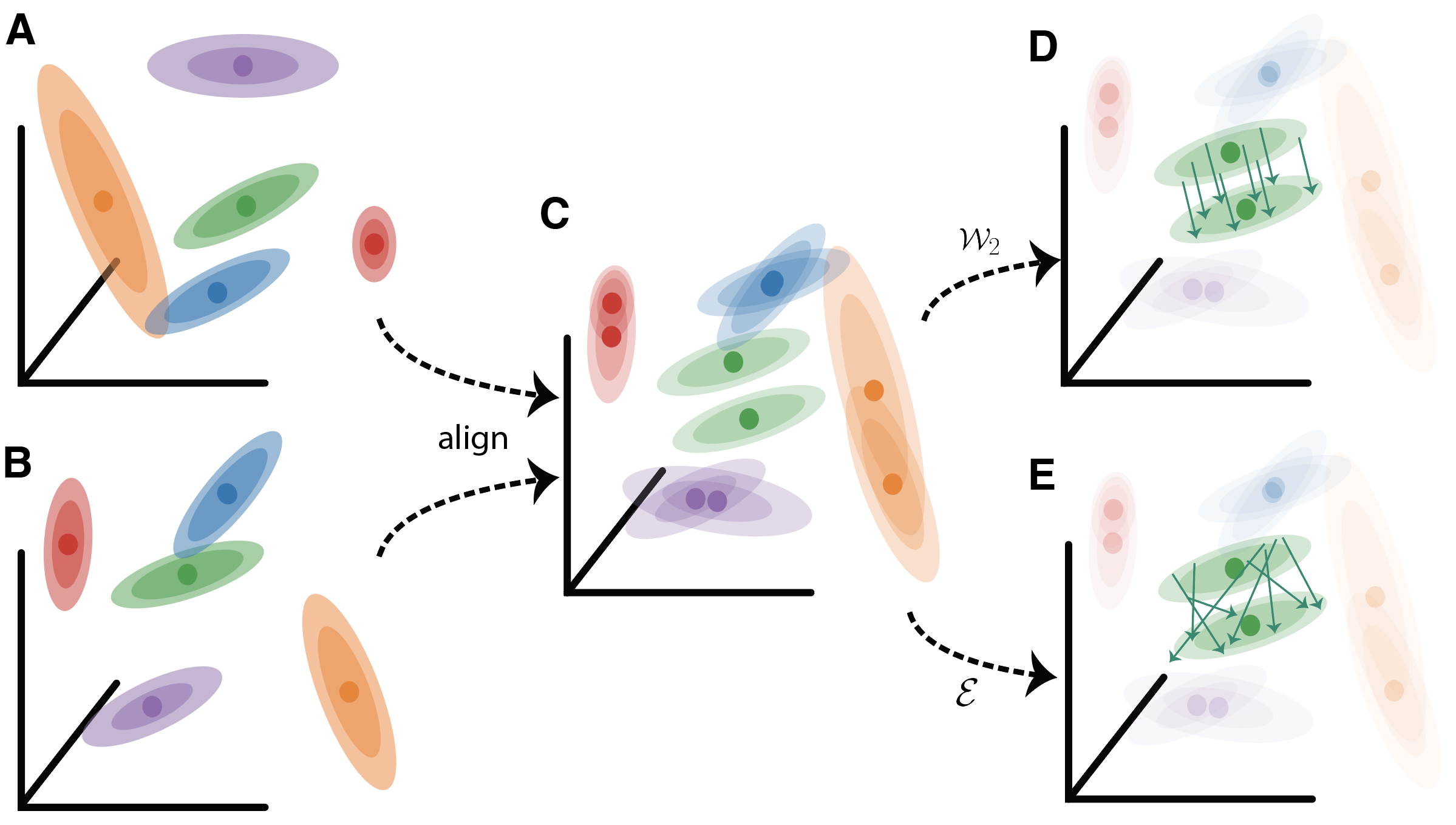}
\caption{
Proposed method intuition using distances based on either $\cW_2$ or $\cE$ ground metrics.
\textit{(A)} and \textit{(B)} Two example stochastic network representations to five stimuli (colors).
\textit{(C)} The optimal alignment of the representations over nuisance transformations (e.g. rotations, $\cG = \cO$).
\textit{(D)} Intuitively, the 2-Wasserstein distance ($\cW_2$) is the minimum cost of turning one density (pile of dirt) to another~\citep{Villani2009,Peyre2019-vo}.
Here we highlight the distance between the two green densities to reduce clutter.
\textit{(E)} Energy distance is based on the maximum-entropy transport map between the two distributions~\citep{Feydy2019-dn}.
}
\label{suppfig:stochastic-shape-metric-schematic}
\end{suppfigure}

\begin{suppfigure}[!h]
\includegraphics[width=\textwidth]{./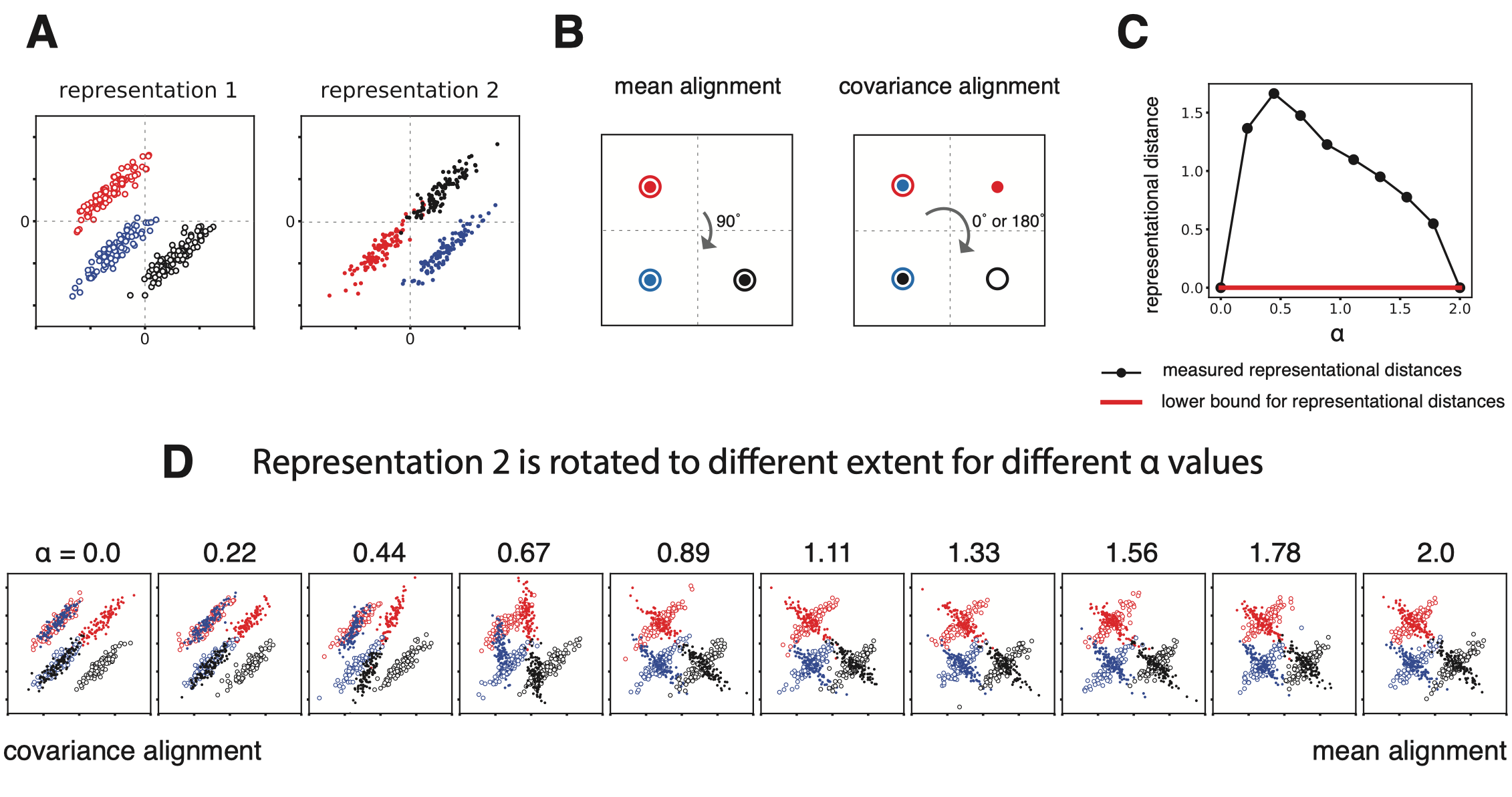}
\caption{Simulated example showing how varying the $\alpha$ parameter in $\overline{\cW}_2^\alpha$ induces different rotational alignments between neural representations ($\cG = \cO$).
\textit{(A)} Two stimulated stochastic representations for three stimulus inputs.
Colors represent different input conditions ($M = 3$), hollow points represent sampled representations from the first network and filled points represent sampled representations from the second network.
The example is constructed so that no rotation can simultaneously align both the means and covariances.
\textit{(B)} If the stochastic metric only takes means into account ($\alpha = 2$), after rotating one of the representations by $90^{\circ}$, two sets of representational means completely overlap, and the distance becomes 0. 
If the stochastic metric only takes covariances into account ($\alpha = 0$), the optimal alignment between the two sets of covariances is either $0^{\circ}$ or $180^{\circ}$, and after this rotation, distance between representations again is 0. 
\textit{(C)} When both $\alpha = 0$ and $\alpha = 2$, distance between the two representations is 0, so the lower bound for the distance for $\alpha$ in the range between 0 and 2 is also 0. We computed the stochastic metric within this range of $\alpha$, and the final distance is generally above the lower bound.
\textit{(D)} Optimal rotation between the two representations at different values of $\alpha$.
}
\label{suppfig:rotation}
\end{suppfigure}

\begin{suppfigure}[!h]
\includegraphics[width=\textwidth]{./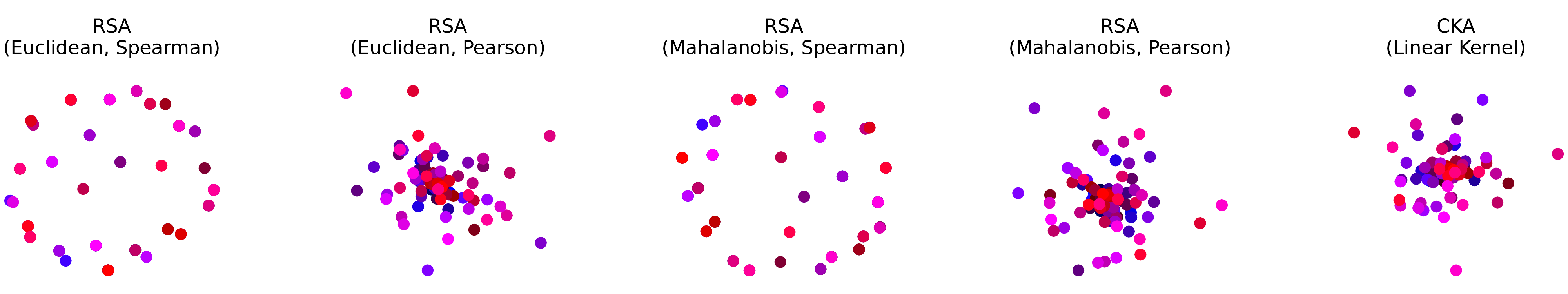}
\caption{
Associated with \Cref{fig:toy-dataset} from the main text.
Embeddings of ``toy dataset'' networks (see Fig.~\ref{fig:toy-dataset}A-B) visualized by multi-dimensional scaling of existing dissimilarity measures.
Each point represents a network, the color scheme is the same as in Fig.~\ref{fig:toy-dataset}C.
All methods fail to recover a reasonable embedding which captures representational differences (compare with stochastic shape metric embeddings in Fig.~\ref{fig:toy-dataset}C and Supp. Fig.~\ref{suppfig:toy-dataset}C).
Starting from the left, the first two plots use representational similarity analysis (RSA;~\citealt{Kriegeskorte2008_frontiers}) with two forms of correlation distance (Spearman and Pearson) applied to Euclidean representational similarity matrices.
The next two plots use Mahalanobis distance re-weighted by the noise covariance~\citep{Walther2016-se} rather than Euclidean distance.
The final plot shows an embedding by centered kernel alignment with a linear filter~\citep{Kornblith2019}.
}
\label{suppfig:toy-dataset-baselines}
\end{suppfigure}

\begin{suppfigure}[!h]
\includegraphics[width=\textwidth]{./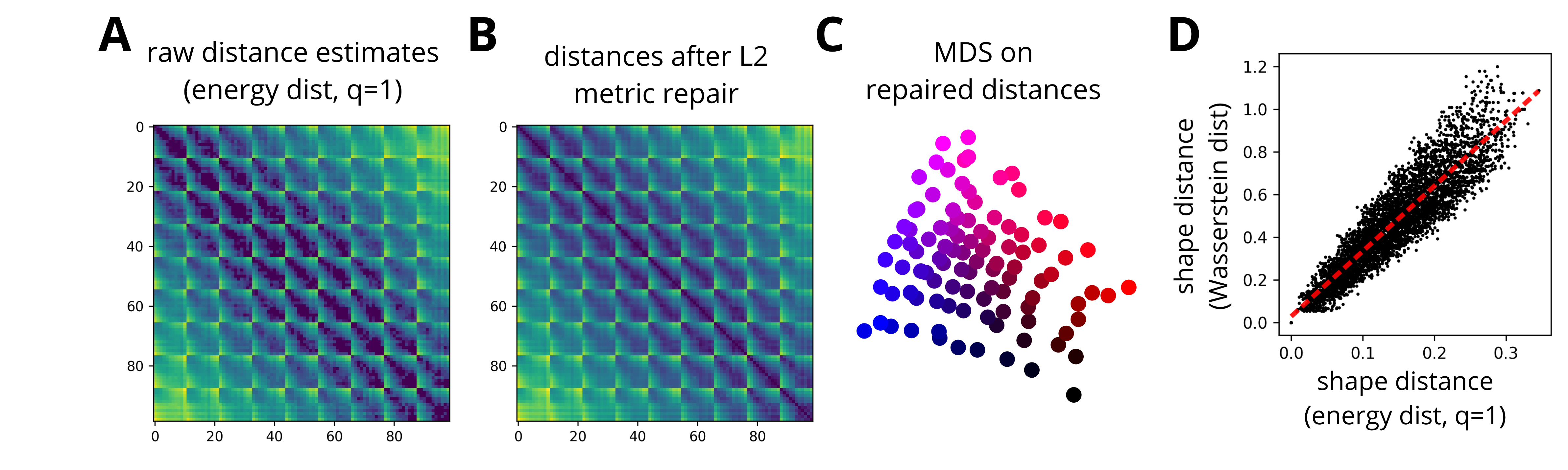}
\caption{Associated with \Cref{fig:toy-dataset} from the main text. Stochastic shape metrics with energy distance also recover the ``ground truth'' structure of synthetic ``toy data''.
\textit{(A)} Matrix of estimated pairwise distances computed with $\cE_1$ ground metric on the synthetic data shown in Fig.~\ref{fig:toy-dataset}A.
\textit{(B)} Matrix of pairwise distances after quadratic metric repair (see \cref{subsec:metric-repair-supplement}) was performed to correct for minor triangle inequality violations.
\textit{(C)} Multidimensional scaling embedding of the distance matrix in panel B into 2D Euclidean space. Compare with Fig.~\ref{fig:toy-dataset}C.
\textit{(D)} Linear correlation between stochastic shape distances with energy distance ground metric (i.e. off-diagonal entries of panel B) and 2-Wasserstein ground metric (i.e. off-diagonal entries of Fig.~\ref{fig:toy-dataset}B). Red dashed line denotes the best linear model according to a least-squares criterion.
}
\label{suppfig:toy-dataset}
\end{suppfigure}

\begin{suppfigure}[!h]
\includegraphics[height=2in]{./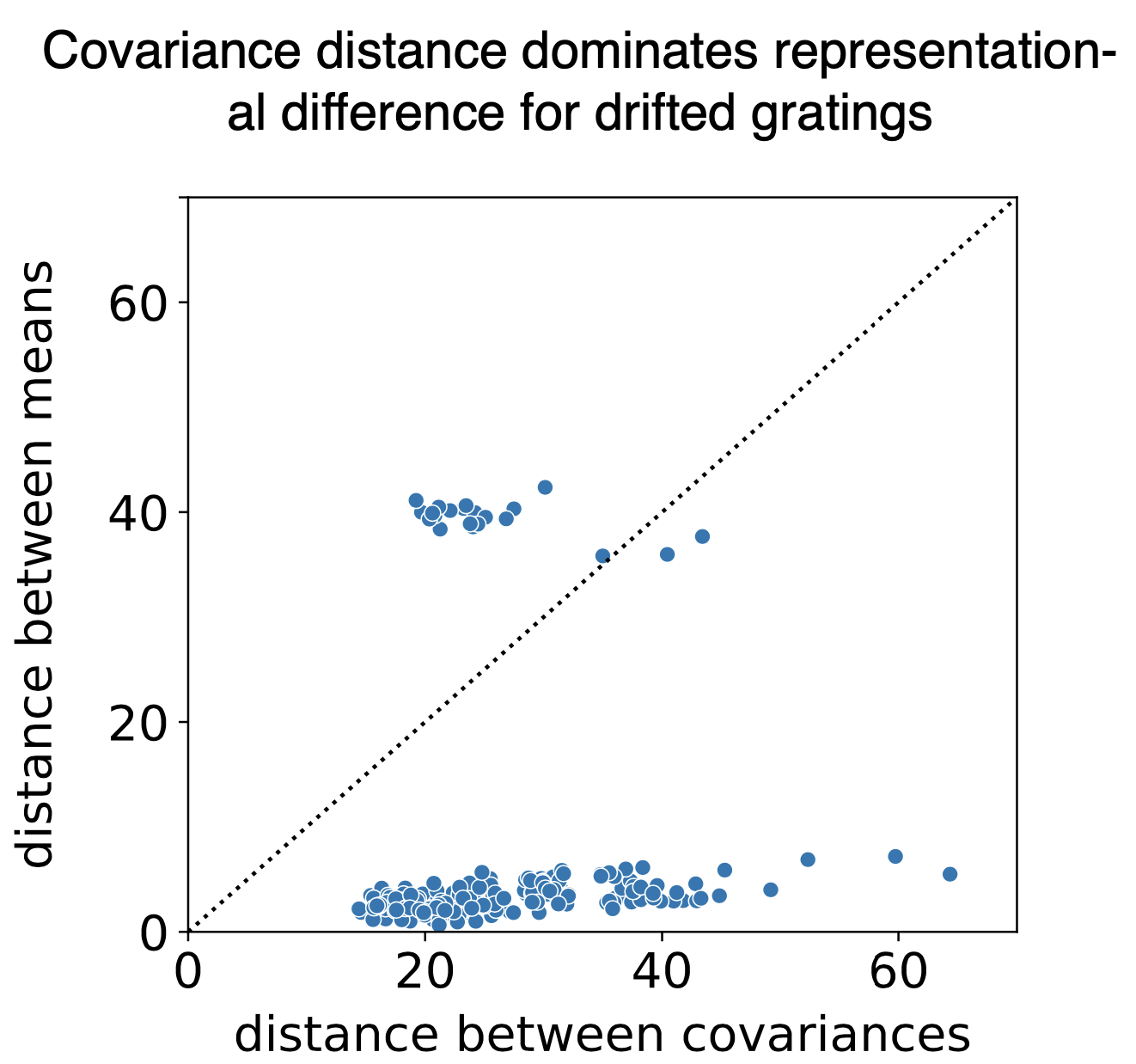}
\centering
\caption{Associated with \Cref{fig:bio-dataset} from the main text. 
Drifted gratings (4 drifting directions, 75 repeats each) were presented in a different set of experimental sessions. 
Like (artificial) static gratings, representational distances across sessions for drifted gratings are dominated by covariance differences. 
}
\label{suppfig:drifted_gratings}
\end{suppfigure}

\begin{suppfigure}[!h]
\includegraphics[height=3.5in]{./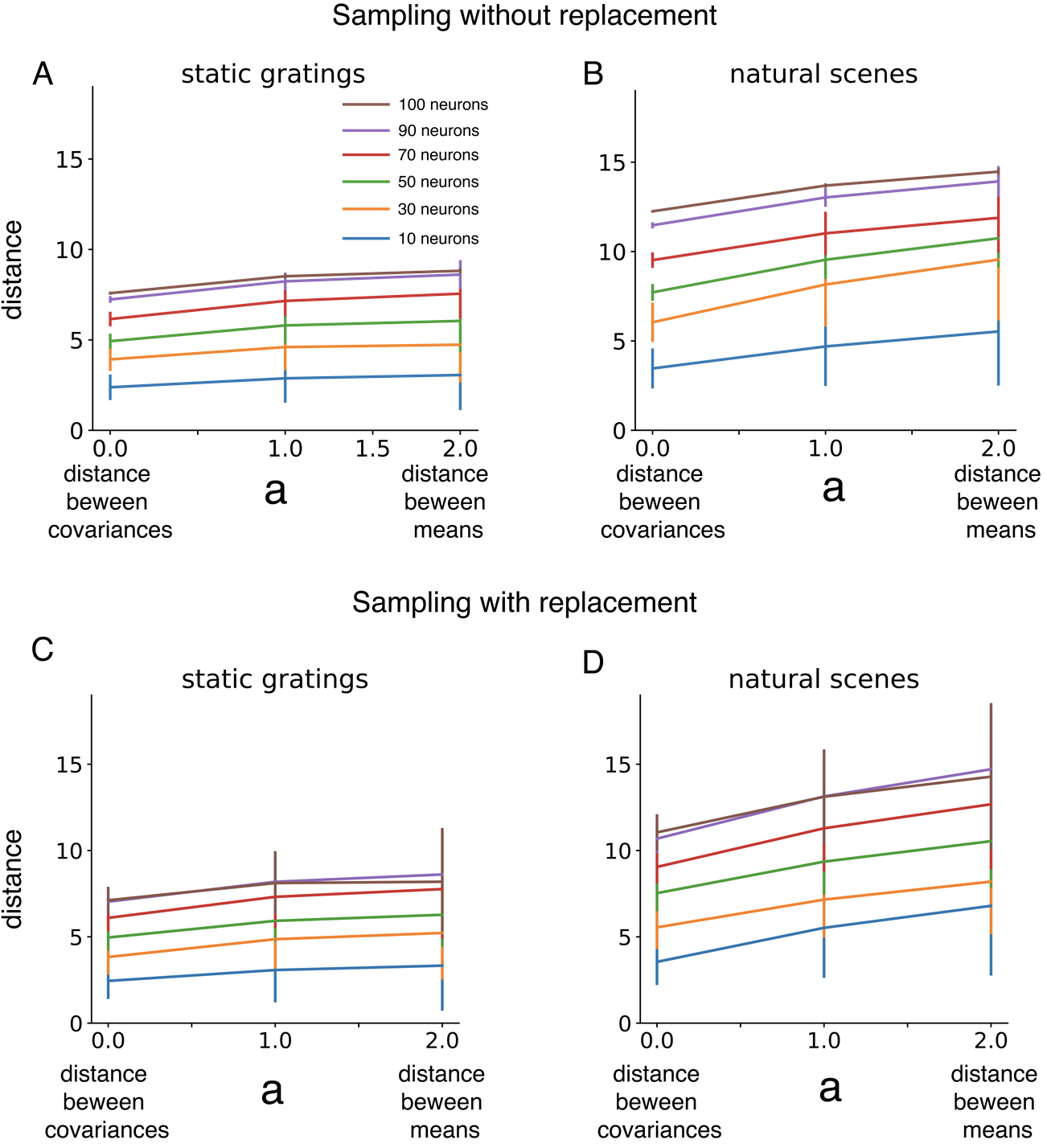}
\centering
\caption{We use a simulation to explore how the size of the neural population recording affects our conclusions about representational distances.
In particular, does the ratio of mean-insensitive to covariance-insensitive across-animal distances ($\alpha = 0$ vs. $\alpha = 2$) change when we sub-sample neurons?
For this simulation, we chose two mice that have 102 and 110 neurons recorded from their respective VISps.
We randomly sample a subset of $n$ neurons among these recorded neurons ($n = 10, 30, 50, 70, 90, 100$), and computed representational distance ($\alpha = 0, 1, 2$) using only the subset.
For panel A and B, we sampled the neurons without replacement, and for panel C and D, we sample with replacement (bootstrapping).
We observe that for all tested $\alpha$, representational distance increases with the number of neurons within the subset.
This is expected because distances will generically increase with the dimension (e.g. the Euclidean distance between two random vectors in a high dimensions will tend to be large, relative to low dimensions).
However, the ratio of $\alpha=0$ and $\alpha=2$ shape distances is preserved when subsampling neurons (all lines are trending upward as a function of $\alpha$).
Error bars capture how the computed distances vary across 15 random draws of $n$ neurons from the recorded population.
}
\label{suppfig:subsampling_neurons}
\end{suppfigure}

\begin{suppfigure}[!h]
\centering
\includegraphics[height=3.8in]{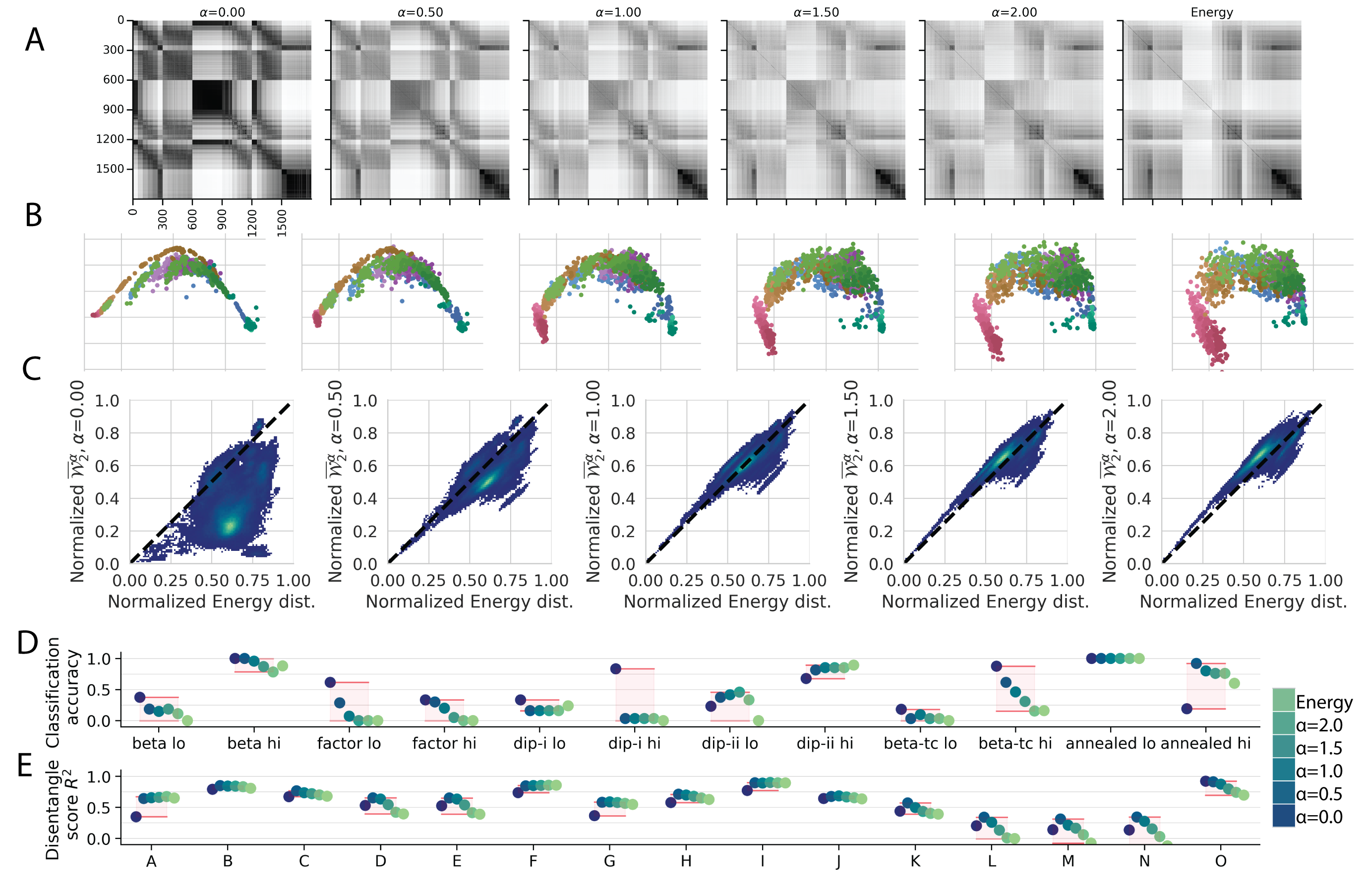}
\caption{
Associated with VAE analyses (\Cref{fig:vae-results}) in the main text.
\textit{(A)} Dissimilarity matrices measured for 1800 \texttt{dSprites}-trained VAEs from \citet{Locatello2019} using generalized interpolated 2-Wasserstein (Equation \ref{eq:interpolated-wasserstein}) with varying $\alpha$ (first five columns), and using energy distance (Equation \ref{eq:energy-distance}) with 64 samples for each unique input (right-most column).
Row/column ordering of each matrix is the same as in \Cref{fig:vae-results}.
\textit{(B)} 2D embeddings corresponding to distance matrices in \textit{(A)}. 
Colors are the same as in \Cref{fig:vae-results}.
We aligned to the left-most panel using Procrustes analysis, allowing for scaling and rotations/reflections.
\textit{(C)} Re-scaling $\overline{\mathcal{W}}_2^\alpha$ distances and energy distances such that they lie between $[0,1]$ reveals that the distribution of energy distances agrees best with $\overline{\mathcal{W}}_2^{\alpha=1.0}$ (middle column).
\textit{(D)} Predicting objective and regularization strength using distance matrices in \textit{(A)}.
\textit{(E)} Predicting disentanglement scores using distance matrices in \textit{(A)}.
See Supp.~\ref{subsec:vae-methods-supplement} for more details.
}
\label{suppfig:vae_energy}
\end{suppfigure}

\begin{suppfigure}[!h]
\centering
\includegraphics[width=\textwidth]{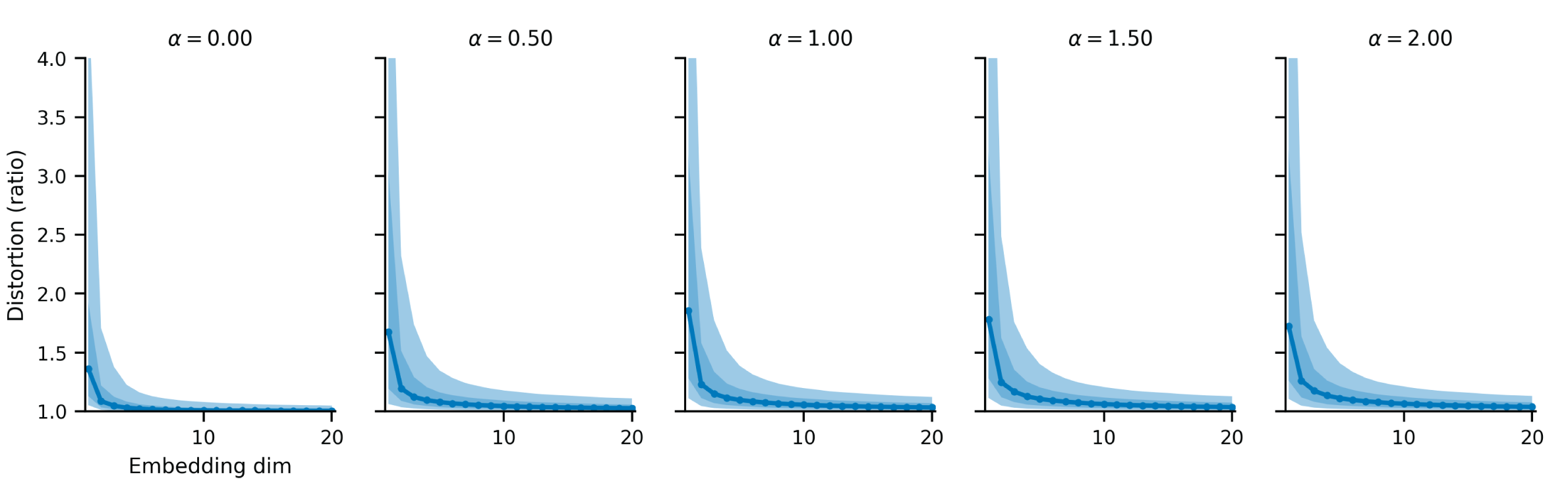}
\caption{
Associated with VAE analyses (\Cref{fig:vae-results}) in the main text. 
Distortion induced by multidimensional scaling of $1800 \times 1800$ dissimilarity matrices with varying embedding dimensionality.
Different shading represents (10th-90th) and (25th-75th) percentiles.
See Supp.~\ref{subsec:vae-methods-supplement} for details.
}
\label{suppfig:vae_distortions}
\end{suppfigure}

\begin{suppfigure}[!h]
\centering
\includegraphics[height=2in]{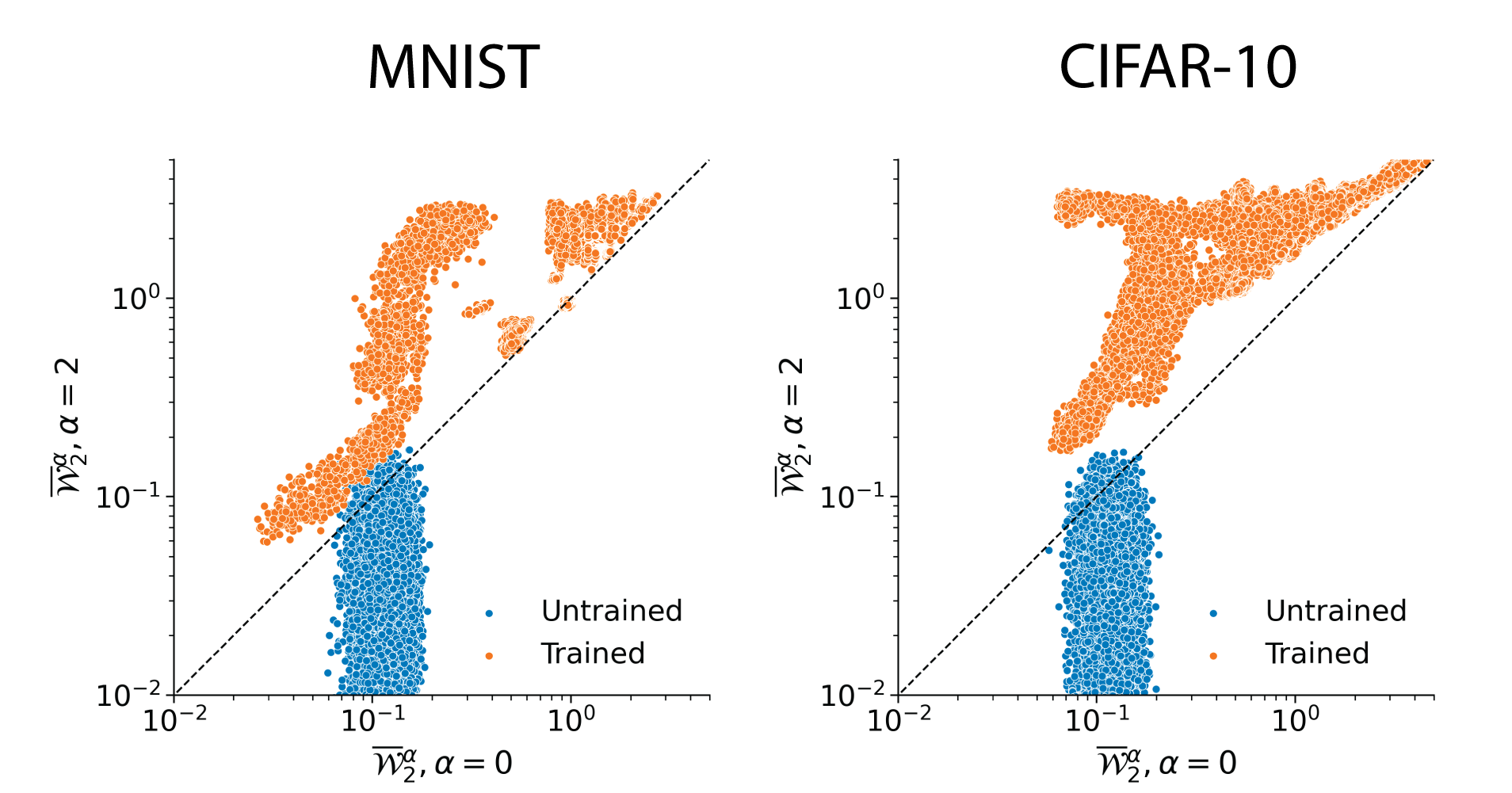}
\caption{
Associated with VAE analyses (\Cref{fig:vae-results}) in the main text. 
We initialized 350 $\beta$-VAEs and trained them on MNIST (left) and CIFAR-10 (right) with different values of $\beta$ in the loss function.
Training led to inter-network distances being dominated by covariance-insensitive ($\alpha=2$) dissimilarity, in agreement with \Cref{fig:vae-results}B of the main text.
See Supp.~\ref{subsec:vae-methods-supplement} for training details.
}
\label{suppfig:vae_mnist_cifar}
\end{suppfigure}

\clearpage

\begin{suppfigure}[!h]
\centering
\includegraphics[width=\textwidth]{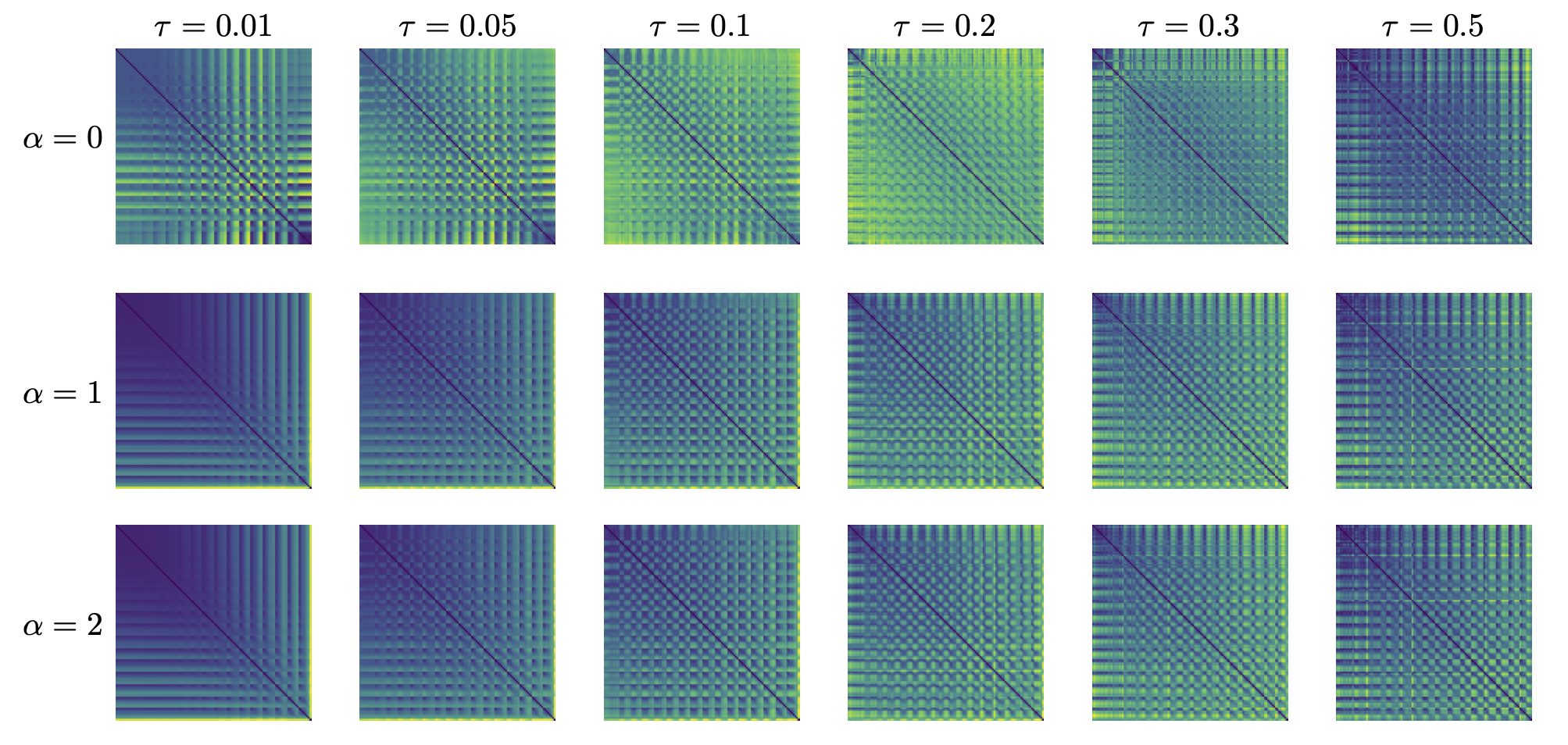}
\caption{
Distance matrices for different values of $\alpha$ and $\tau$.
}
\label{suppfig:data_aug_dist_matrices}
\end{suppfigure}

\begin{suppfigure}[!h]
\centering
\includegraphics[width=\textwidth]{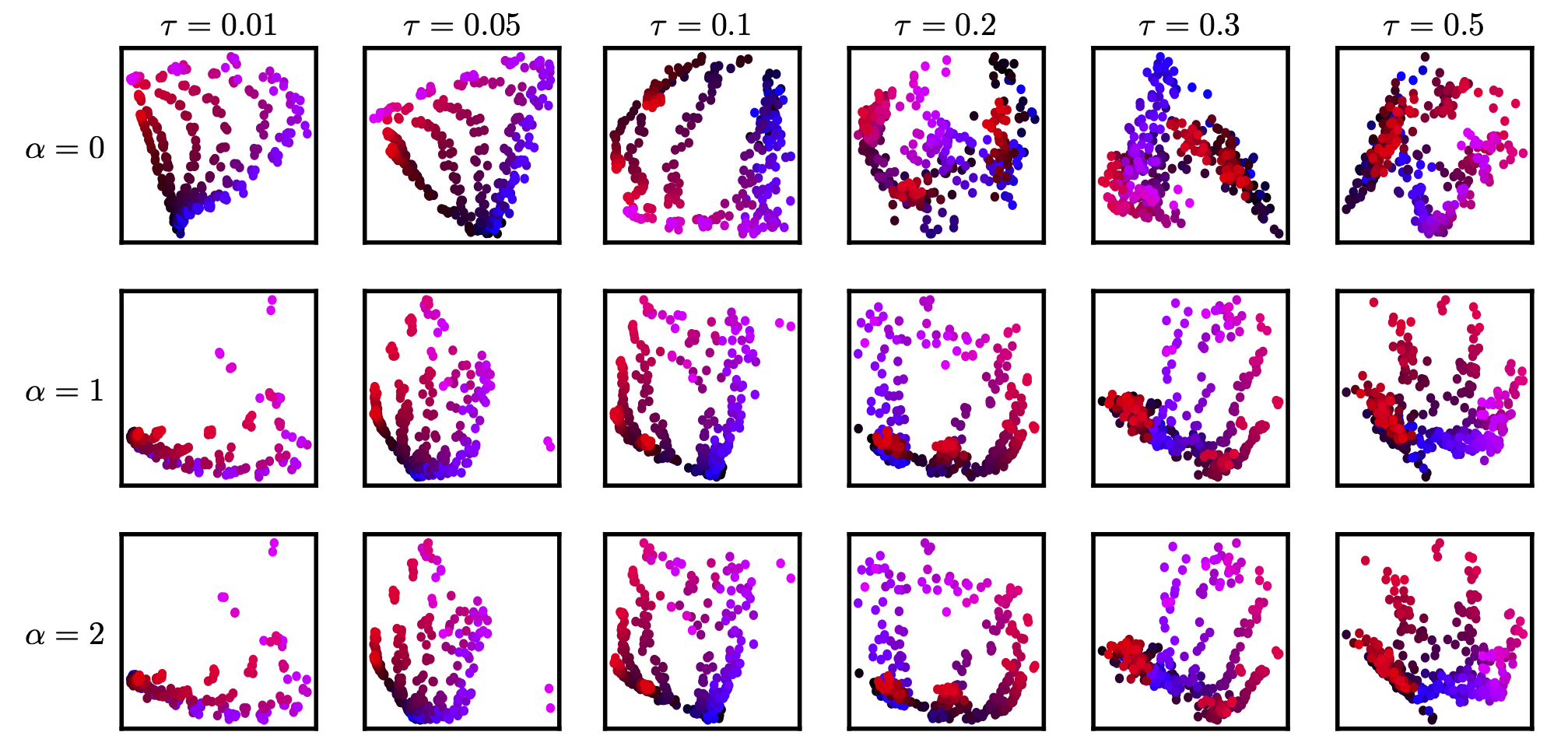}
\caption{
Two dimensional embedding of the distance matrices in Fig.~\ref{suppfig:data_aug_dist_matrices} for different values of $\alpha$ (row) and $\tau$ (column).
}
\label{suppfig:data_aug_embedding}
\end{suppfigure}

\clearpage
\section{Supplemental Methods}

\subsection{Code and reproducibility}
We have attached a zip file containing a self-contained Python script with our implementation of the interpolated generalized Wasserstein distance (equation \ref{eq:interpolated-wasserstein}) and energy distance (equation \ref{eq:energy-distance}) in a file named \texttt{metric.py}.
Additional analysis and source code will be located at 
\url{github.com/ahwillia/netrep}.

\subsection{Allen Brain Observatory Data}
\label{subsec:allen-brain-observatory-supplement}

\subsubsection{Data pre-processing.}

In each recording session, gratings (6 orientations) and natural scenes (119 images) were presented to one mouse, and between 19 to 110 neurons in VISp were recorded by extracellular microelectrode arrays (Neuropixels Visual Coding dataset).
Each neuron's response to an image was measured as the sum of action potentials (spikes) emitted within a 250 millisecond time window (the duration of the stimulus presentation in this dataset).
To compare how similar a single set of stimuli were represented across sessions, we Gaussian-approximated the data recorded for each stimulus. 
Then for each stimulus class (gratings and scenes), we compared between two sets of Gaussians (one per stimulus).

Our metrics compared between sets of Gaussians that have the same dimensionality, so we performed PCA to equalize dimensionality across all sessions. 
We concatenated data recorded for different images within a single stimulus class, and extracted the first 19 principal components in replacement of the total number of recorded neurons for further analysis. 
On average, the extracted 19 principal components explained $83\%$ of the data variance in response to gratings, and $76\%$ of the variance to natural scenes. 

\subsubsection{Estimating response mean and covariance.}

To compare between two neural representations, our stochastic metrics take two sets of Gaussian means and covariances as inputs, where each of which is estimated from the principal components (PCs) extracted from the data.

In each session, a stimulus (either a grating or a scene) was presented over 50 repeats. 
The number of repeats is large compared to conventional neuroscience experiments, but it is still small compared to the total number of recorded neurons (e.g. 110 neurons in one session), or the total number of PCs, which introduces challenges to covariance estimation.
For the mean of each stimulus representation, we used the sample mean from the PCs. 
When number of samples is relatively small, sample covariance has one known bias: it tends to over-estimate large eigenvalues, and under-estimate small eigenvalues of the population covariance.
One standard and effective fix in the literature is to use a shrinkage estimator ($S^*$) -- a linear interpolation between an identity matrix ($I$) and the sample covariance ($S$) (e.g. \cite{ledoit2004, tong2004}):
\begin{equation}
    S^* = \gamma I + (1- \gamma) S.
\end{equation}
This interpolation reduces the eigenvalue bias by balancing between eigenvalues of the sample covariance (overly skewed eigenvalue spectrum), and that of the identity matrix (flat eigenvalue spectrum). 
$\gamma$ of the shrinkage covariance estimator was chosen using cross-validation.
To obtain the cross-validation training set, we randomly sample half of the epochs from each data trial, and for test set, we used the remaining half.

\subsection{Variational autoencoders and latent factor disentanglement (Supplement to \cref{subsec:results-vae})}
\label{subsec:vae-methods-supplement}
\subsubsection{VAE objectives and architectures used in this study}

Because conventional VAE encoders output a latent Gaussian conditional mean and covariance, this makes them an ideal framework with which to apply stochastic shape metrics.
In particular, the interpolated Wasserstein distance (equation \ref{eq:interpolated-wasserstein}) is exact in this case.
We used a set of 1800 VAEs trained on \texttt{dSprites} from the extensive study by \citet{Locatello2019}.
These include six variants of the VAE objective ($\beta$, Factor, $\beta$-TC, DIP-I, DIP-II, Annealed), each with six different levels of regularization strength and 50 repetitions at different random seeds.
The dimensionality of the latent representation, from which we obtained activations used in this study, was 10D.
We refer the reader to their supplemental document for more details about each architecture and training scheme.
The authors provided metadata associated with each network such as training hyperparameters as well as factor disentanglement scores (see below).

In addition to the VAEs trained on \texttt{dSprites}, we trained 350 $\beta$ VAEs on MNIST and CIFAR-10.
To remain consistent with the study by \citet{Locatello2019}, we trained $\beta$-VAEs at 6-8 different levels of regularization strength ($1 \leq \beta \leq 16$) and 50 random initialization seeds.
We used the standard VAE symmetric encoder-decoder architecture with $L$ layers ($L=3$ for MNIST and $L=4$ for CIFAR-10), each with 64 $4\times 4$ convolutional filters with stride 2, followed by a fully-connected layer with 256 hidden units and ReLU activations.
The latent representations of these networks were diagonal Gaussians, and were 10D (MNIST) or 50D (CIFAR-10).
The final 2D convolution-transpose layer of the decoder used a sigmoid nolinearity to ensure outputs were between $[0, 1]$.

We zero-padded the height and width of MNIST images from $28 \times 28 \rightarrow 32 \times 32$.
Model training used batch sizes of 64 images, and up to 1000 training epochs.
We used the Adam optimizer with 1E-4 learning rate and model checkpoints at each epoch.
Models used in this study were from checkpoints corresponding to the lowest validation loss during training.
Latent activations used for shape metric analysis in this study were obtained using a held-out test set of 3500 images. 

\subsubsection{$\mathcal{\overline{W}}_2^{\alpha=0}$ vs. $\mathcal{\overline{W}}_2^{\alpha=2}$ before and after training}

Fig.~\ref{fig:vae-results}B of the main text shows that, prior to training, VAEs are primarily separated by mean-insensitive distance ($\mathcal{\overline{W}}_2^\alpha ,\alpha=0$, equation \ref{eq:interpolated-wasserstein}), whereas after training they are separated by covariance-insensitive distance ($\alpha=2$).
We sought to confirm whether this effect persisted across different datasets using VAEs trained on MNIST and CIFAR-10 (described above).
SuppFig.~\ref{suppfig:vae_mnist_cifar} shows that pairwise network $\mathcal{\overline{W}}_2^\alpha$ distances before and after training indeed exhibit this effect on these more complex datasets.
We reproduced these effects using both default PyTorch weight initialization and Kaiming weight initialization.

\subsubsection{Computing energy distance between trained VAEs}\label{subsec:vae_energy_supplement}

In addition to measuring interpolated Wasserstein distances (equation \ref{eq:interpolated-wasserstein}), we also repeated our analyses using energy distance (equation \ref{eq:energy-distance}).
Rather than requiring computing means and covariances, this method operates directly on samples.
Since VAE latents are parameterized as Gaussian, we generated data by randomly sampling from the Gaussian defined by the model's conditional mean and covariance for a given input.
We sampled 64 samples for 2048 images and computed pairwise energy distances between all 1800 networks in the \citep{Locatello2019} \texttt{dSprites} dataset (SuppFig.~\ref{suppfig:vae_energy}).
Interestingly, the energy dissimilarity matrix was qualitatively different than all of the $\mathcal{\overline{W}}_2^\alpha$ dissimilarity matrices (SuppFig.~\ref{suppfig:vae_energy}A).
The geometry of the embedded points was accordingly different than embeddings derived from $\mathcal{\overline{W}}_2^\alpha$ distances (SuppFig.~\ref{suppfig:vae_energy}B).

The energy dissimilarity matrix seemed to correlate with those derived from $\mathcal{\overline{W}}_2^\alpha$ distances (SuppFig.~\ref{suppfig:vae_energy}C).
We noted, however, that after re-scaling the dissimilarity matrices such that they lie between [0, 1], the distribution of pairwise energy distances was most in line with interpolated Wasserstein distances when $\alpha=1$ (SuppFig.~\ref{suppfig:vae_energy}C middle panel).

We repeated the classification and disentanglement $k$NN analyses done in the main text using neighborhoods defined by energy distance (SuppFig.~\ref{suppfig:vae_energy}D,E).
In most cases, using energy distance performed as well as, but sometimes worse than $\mathcal{\overline{W}}_2^{\alpha=2}$, the covariance-insensitive Wasserstein metric.
It is possible that computing energy distance using a higher number of samples per image than 64 would improve estimates and downstream regression/classification performance.
In general it would be interesting to examine the effects of sample size and empirical energy distance estimate convergence.
We leave a deeper investigation into this for future work.

\subsubsection{Low-dimensional projections}

To determine a reasonable embedding dimensionality for $K$ networks, we performed the following analysis.
Given a symmetric $K \times K$ distance matrix $D$, with elements $d(i,j)$, we used multidimensional scaling to embed $K$ networks into a low M-dimensional space.
Networks in this embedded space can be encoded by a new, Euclidean distance matrix $\tilde{D}$ with elements $\tilde{d}(i, j)$.
For each element on the upper-triangle of these matrices, we computed a distortion ratio,
\begin{align}
    \Delta(i, j) &= d(i, j)/ \tilde{d}(i, j)\\
    \text{Distortion}(i,j) &= \max(\Delta(i, j), 1/\Delta(i, j)).
\end{align}
By sweeping embedding dimensionality M from 1-20, we determined that using an MDS embedding dimensionality of M=15 produced reasonably minimal distortions (SuppFig.~\ref{suppfig:vae_distortions}) for all the distance matrices.
After embedding the networks into 15D, we then performed principal components analysis to obtain the scatterplots in Fig.~\ref{fig:vae-results}A and SuppFig.~\ref{suppfig:vae_energy}.
In the main text, we used orthogonal Procrustes to align the principal components of each subpanel to the left-most panel.
For SuppFig.~\ref{suppfig:vae_energy}B, we again aligned all panels to the left-most panel using Procrustes analysis, but allowed for  re-scaling in order to compensate for energy distances being on an arbitrary scale compared with $\mathcal{\overline{W}}_2^\alpha$ distances.

\subsubsection{VAE disentanglement metrics}

For each of the 1800 VAEs trained on \texttt{dSprites}, \citet{Locatello2019} computed a large array of factor disentanglement scores proposed by previous studies.
The scores abbreviated in Fig.~\ref{fig:vae-results}F are listed in the below table.
We list the different disentangement scores using the same naming convention as in the work of \citet{Locatello2019}.
We refer the reader to their supplement for more details on each of these scores.

\begin{center}
\begin{tabular}{| c| l |}
\hline
& \textbf{Disentanglement score}\\
\hline
\textbf{A} & $\beta$-VAE eval accuracy \\
\textbf{B} & Disentanglement, Informativeness, Completeness (DCI) disentanglement \\
\textbf{C} & DCI completeness \\
\textbf{D} & DCI informativeness \\
\textbf{E} & Factor VAE eval accuracy \\
\textbf{F} & Logistic regression mean test accuracy \\
\textbf{G} & Boosted trees mean test accuracy \\
\textbf{H} & Discrete mutual information gap (MIG) \\
\textbf{I} & Modularity score \\
\textbf{J} & Explicitness test score \\
\textbf{K} & Separated Attribute Predictability (SAP) score \\
\textbf{L} & Gaussian total correlation \\
\textbf{M} & Gaussian Wasserstein correlation \\
\textbf{N} & Gaussian Wasserstein normalized correlation \\
\textbf{O} & Mutual information score\\
\hline
\end{tabular}
\end{center}

\subsubsection{$k$-nearest neighbors analyses}

Because the stochastic metrics used in this study satisfy the triangle inequality, this permitted non-parametric analyses using $k$-nearest neighbors ($k$NN) to determine whether network similarity carried information about model hyper-parameters and task performance.
We used scikit-learn's \texttt{KNeighborsClassifier} and \texttt{KNeighborsRegressor} for classification and regression analyses, respectively.
We withheld a test set and performed 6-fold cross-validation on the remaining data to determine $k$, the number of neighbors to use for classification/regression.
We reported final performance using the average score on the held-out test set.

For classification analyses, we trained models to decode random initial seed (1/50 chance, Fig.~\ref{fig:vae-results}C), and model objective along with regularization strength (6 objective $\times$ 6 regularization strengths in the Locatello study, i.e. $1/36$ chance, Fig.~\ref{fig:vae-results}E).
In terms of regression analyses, we trained models to predict training reconstruction loss Fig.~\ref{fig:vae-results}D and disentanglement scores Fig.~\ref{fig:vae-results}F and reported average $R^2$ on the held-out test set.

\subsection{Additional Details for Patch-Gaussian Augmentation Experiments}
\label{supp:patch-gauss-supplement}
\subsubsection{Training and architecture details}\label{subsubsec:patch-gauss-supplement-training}
We use the ResNet-18 architecture~\citep{he2016deep} where an intermediate fully-connected layer of dimension 100 is added after the final average pooling layer, followed by a linear readout layer.
All analyses were done on the representations produced of this intermediate fully-connected layer.

Following standard practice, images were randomly cropped, followed by a random horizontal flip. 
A modified version of the Patch-Gaussian augmentation was applied, where the entire noisy patch is constrained to reside in the image.
Lastly, we subtract off the per-channel mean and divide by the per-channel standard deviation. 
For the Patch-Gaussian augmentation, we swept over 16 values of patch width, $W \in \{2, 4, 6, 8, 10, 12, 14, 16, 18, 20, 22, 24, 26, 28, 30, 32\}$ and 7 different values of noise scale, $\sigma \in \{0.05, 0.1, 0.2, 0.3, 0.5, 0.8, 1. \}$, leading to $17 \times 6 = 112$ possible $(W, \sigma)$ combinations.
For each $(W, \sigma)$ pair, we trained 3 networks, each with a different random seed, leading to $16 \times 7 \times 3 = 336$ networks.
As a baseline, we also trained networks with no Patch-Gaussian augmentation over three random seeds, giving us 339 total networks.

We used stochastic gradient descent with a momentum of 0.9, batch size of 128 and weight decay of 1E-4.
Networks were trained for 200 epochs where the learning rate was initially set to 0.1 and halved every 60 epochs.

\subsection{Visualization of Hidden Layer Representations}
To visualize the effect of Patch-Gaussian hyper-parameters on hidden layer representations as shown in Fig.~\ref{fig:data-aug-results}A, we randomly selected one image from each of the 10 classes, e.g. $z_1, \ldots, z_{10}$.
For each image $i$---and a given value of $\tau$---we drew 100 samples from $\mathcal{N}(z_i, \tau)$ and collected the hidden layer representations, leading to 1,000 points total.
Mutli-dimensional scaling was then applied to embed the representations into two dimensions.

\subsubsection{Stochastic Shape Metric Computation}
2,000 images were used for computing the stochastic shape metric.
To estimate the conditional mean and covariance for each image, 1,000 samples were first drawn from $\mathcal{N}(z_i, \tau)$.
The conditional mean was estimated via a Monte Carlo estimator.
The conditional covariance was computed by first computing the Monte Carlo estimator and then adding 0.0001 to the diagonal to ensure the covariance is well-conditioned.

To visualize the metric shape induced by the stochastic shape metric, multi-dimensional scaling was used to embed the networks into 20 dimensions.
Principal component analysis was then done to linearly project the MDS embeddings onto the top 2 principal components.

We used three different values for the interpolated Wasserstein distance, $\alpha \in \{0, 1, 2\}$ and 6 values for the magnitude of the input perturbation, $\tau \in \{0.01, 0.05, 0.1, 0.2, 0.3, 0.5\}$.
All distance matrices are shown in Fig.~\ref{suppfig:data_aug_dist_matrices}.
The corresponding two-dimensional embedding are shown in Fig.~\ref{suppfig:data_aug_embedding}.


\section{Proof of \Cref{proposition:stochastic-shape-metric}}
\label{supplement:proof-of-main-proposition}

We first prove two lemmas, from which the main proposition immediately follows.

\begin{lemma}
\label{lemma:group-theory-lemma}
If $\cG$ is a group of isometries on a metric space $(d_1, S)$ then
\begin{equation}
d(x, y) = \min_{\mbT \in \cG} d_1(x, \mbT(y))
\end{equation}
is a pseudometric which can be used to define a \textit{metric} over equivalence classes ${[x] = \{y \mid y \sim x\}}$ where the equivalence relation is defined as:
\begin{equation}
\label{eq:explicit-equivalence-relation}
x \sim y ~ \iff ~ \exists ~ \mbT \in \cG ~~~ \text{such that} ~~~ x = \mbT(y)
\end{equation}
\end{lemma}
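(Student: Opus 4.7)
My plan is to verify the four pseudometric axioms for $d$ on $S$, then argue that the ``zero distance'' relation coincides with $\sim$, so that $d$ descends to an honest metric on the quotient $S/\sim$. Each axiom will invoke exactly one feature of $\cG$: identity $\mbI \in \cG$ for $d(x,x)=0$, closure under inversion for symmetry, and closure under composition for the triangle inequality. The isometry hypothesis $d_1(\mbT(a),\mbT(b)) = d_1(a,b)$ for all $\mbT \in \cG$ will be the tool I use repeatedly to ``slide'' a group element between the two arguments of $d_1$. Before anything else, I will note in one line that $\sim$ is genuinely an equivalence relation (reflexivity from $\mbI$, symmetry from invertibility, transitivity from closure under composition), so the equivalence classes $[x]$ are well-defined.

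For non-negativity and $d(x,x) = 0$, I will just observe that $d(x,x) \le d_1(x, \mbI(x)) = 0$. For symmetry, I will take a minimizer $\mbT^\star$ achieving $d(x,y) = d_1(x, \mbT^\star(y))$, apply the isometry of $\mbT^{\star,-1} \in \cG$ to rewrite this as $d_1(\mbT^{\star,-1}(x), y)$, and then use symmetry of $d_1$ to bound $d(y, x)$ from above by $d(x, y)$; the reverse inequality is identical. For the triangle inequality, I will take minimizers $\mbT_1$ for $d(x,y)$ and $\mbT_2$ for $d(y,z)$, and use that $\mbT_1 \mbT_2 \in \cG$ is a valid candidate in the definition of $d(x,z)$. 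Combining the triangle inequality for $d_1$ with midpoint $\mbT_1(y)$ and the isometry of $\mbT_1$ then gives
\begin{equation*}
d(x,z) \le d_1(x, \mbT_1\mbT_2(z)) \le d_1(x, \mbT_1(y)) + d_1(\mbT_1(y), \mbT_1\mbT_2(z)) = d(x,y) + d(y,z).
\end{equation*}

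It remains to pass from a pseudometric on $S$ to a metric on equivalence classes. Since $d_1$ is a bona fide metric, $d(x,y) = 0$ holds iff some $\mbT \in \cG$ satisfies $d_1(x, \mbT(y)) = 0$, i.e.\ $x = \mbT(y)$, which is precisely $x \sim y$. I will then check well-definedness on the quotient by showing $d(\mbS(x), y) = d(x, y)$ for any $\mbS \in \cG$ (and likewise in the second argument), which reduces to a change of variables $\mbT \mapsto \mbT \mbS^{-1}$ inside the minimization, valid by closure and invertibility. Defining $\bar d([x], [y]) := d(x, y)$ then inherits symmetry and the triangle inequality from the pseudometric, and now vanishes iff $[x] = [y]$. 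I do not anticipate a single ``hard'' step; the only place demanding real care is the triangle inequality, where I must keep the bookkeeping of which minimizer acts on which argument consistent so that the isometry of $\mbT_1$ cleanly collapses the middle term.
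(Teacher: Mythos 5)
Your proposal is correct and follows essentially the same route as the paper's proof: symmetry via the isometry of the inverse of the optimal alignment plus symmetry of $d_1$, and the triangle inequality by substituting the composed candidate $\mbT_1\mbT_2$ and collapsing the middle term with the isometry of $\mbT_1$. The extra well-definedness check on the quotient and the explicit verification that $\sim$ is an equivalence relation are details the paper delegates to the standard pseudometric-quotient construction, but they add nothing that changes the argument.
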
 

\begin{proof}
This proof is more or less reproduced from \citet{Williams_etal_2021_shapes}, and similar arguments can be found elsewhere within the statistical shape analysis literature.

The equivalence relation in \cref{eq:explicit-equivalence-relation} is self-evident.
This simply states that $d(x, y) = 0$ if and only if $x = \mbT(y)$ for some alignment transformation $\mbT \in \cG$.
Then we define our equivalence relation as: $x \sim y$ if and only if $d(x, y) = 0$.
In other words, although $d$ technically only defines a pseudometric on $S$, it is easily associated to a proper metric on a set of equivalence classes, i.e. the quotient space $(S / \sim)$.
See \citet{Howes1995-mn} for more background details (page 27, in particular).

Now we prove that $d$ is symmetric.
Let $\mbT_{xy}$ denote the optimal transformation from $Y$ to $X$.
That is, ${\mbT_{xy} = \argmin_{\mbT \in \cG} d_1(x, \mbT(y))}$ and ${\mbT_{yx} = \argmin_{\mbT \in \cG} d_1(y, \mbT(x))}$.
Then, using the fact that $d_1$ is symmetric and that $\cG$ defines a group of isometries, we have
\begin{equation*}
d(x, y) = d_1(x, \mbT_{xy}(y)) = d_1(\mbT_{xy}(y), x) = d_1(y, \mbT_{xy}^{-1}(x))) \leq d_1(y, \mbT_{yx}(x))) = d(y, x)
\end{equation*}
but also
\begin{equation*}
d(y, x) = d_1(y, \mbT_{yx}(x)) = d_1(\mbT_{yx}(x), y) = d_1(x, \mbT_{yx}^{-1}(y))) \leq d_1(x, \mbT_{xy}(y))) = d(x, y).
\end{equation*}
The only way for both inequalities to hold is for $d(x, y) = d(y, x)$.
Also, we see that ${\mbT_{xy} = \mbT_{yx}^{-1}}$, which we will exploit below.

It remains to prove the triangle inequality.
This is done as follows:
\begin{align}
d(x, y) &= d_1(x, \mbT_{xy}(y)) \\
&\leq d_1(x, \mbT_{xz} ( \mbT_{zy} (y) ) ) \\
&\leq d_1(x, \mbT_{xz} (z)) + d_1(\mbT_{xz} (z), \mbT_{xz} (\mbT_{zy} (y))) \\
&= d_1(x, \mbT_{xz}(z)) + d_1(Z, \mbT_{zy}(y)) \\
&= d(x, z) + d(z, y)
\end{align}
The first inequality follows from replacing the optimal alignment, $\mbT_{xy}$, with a sub-optimal alignment, given by function composition $\mbT_{xz} \circ \mbT_{zy}$.
(Recall that $\cG$ is a group and so is closed under function compositions.)
The second inequality follows from the triangle inequality on $d_1$, after choosing $\mbT_{xz}(z)$ as the midpoint.
The penultimate step follows from $\mbT_{xz}^{-1}$ being an isometry on $d_1$ and since $\mbT_{xz} \in \cG$, we have $\mbT_{xz}^{-1} \in \cG$ by the group properties of $\cG$.

\end{proof}

\begin{lemma}
\label{lemma:minkowski-lemma}
Let $(d_2, S_2)$ be a metric space, let $f(\cdot)$ and $g(\cdot)$ be functions mapping $\cZ \mapsto S_2$, and let $Q$ be a probability distribution supported on $\cZ$. Then,
\begin{equation}
d_1(f, g) = \Big ( \, \underset{z \sim Q}{\expect} \, d^2_2(f(z), g(z)) \, \Big )^{1/2}
\end{equation}
is a metric over the set of functions mapping $\cZ \mapsto S_2$.
\end{lemma}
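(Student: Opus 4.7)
The plan is to verify the three metric axioms (with identity of indiscernibles understood modulo $Q$-almost-everywhere equality), treating the triangle inequality as the main step and reducing it to Minkowski's inequality applied to the real-valued function $z \mapsto d_2(f(z), g(z))$.

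First I would dispense with the easy axioms. Non-negativity is immediate because $d_2^2 \geq 0$, so its expectation is non-negative and admits a real square root. Symmetry follows pointwise from symmetry of $d_2$, so the integrand is unchanged under swapping $f$ and $g$. For identity of indiscernibles, $d_1(f,g) = 0$ forces $d_2(f(z), g(z)) = 0$ for $z$ in a set of full $Q$-measure, hence $f = g$ as elements of the relevant quotient space (the set of equivalence classes of functions agreeing $Q$-almost everywhere on $\textrm{supp}(Q)$). I would note this mild technicality explicitly, as it mirrors the pseudometric-to-metric passage already used in \Cref{lemma:group-theory-lemma} and in the equivalence relation specified in \Cref{proposition:stochastic-shape-metric}.

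The nontrivial step is the triangle inequality: given $f, g, h : \cZ \to S_2$, show
\begin{equation*}
\Big( \underset{z \sim Q}{\expect} \, d_2^2(f(z), h(z)) \Big)^{1/2} \leq \Big( \underset{z \sim Q}{\expect} \, d_2^2(f(z), g(z)) \Big)^{1/2} + \Big( \underset{z \sim Q}{\expect} \, d_2^2(g(z), h(z)) \Big)^{1/2}.
\end{equation*}
The plan is to fix $z$ and apply the pointwise triangle inequality on $(S_2, d_2)$ to get $d_2(f(z), h(z)) \leq d_2(f(z), g(z)) + d_2(g(z), h(z))$. Since both sides are non-negative, squaring and taking expectation preserves the direction of the inequality only after using monotonicity of the $L^2(Q)$ norm on non-negative functions. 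Concretely, define the non-negative random variables $A(z) = d_2(f(z), g(z))$ and $B(z) = d_2(g(z), h(z))$ and view them as elements of $L^2(Q)$. Pointwise we have $d_2(f(z), h(z)) \leq A(z) + B(z)$, so $\|d_2(f(\cdot), h(\cdot))\|_{L^2(Q)} \leq \|A + B\|_{L^2(Q)}$, and Minkowski's inequality then yields $\|A + B\|_{L^2(Q)} \leq \|A\|_{L^2(Q)} + \|B\|_{L^2(Q)}$, which is exactly the desired bound.

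The main obstacle, such as it is, is mostly bookkeeping rather than mathematical depth: I must ensure measurability of $z \mapsto d_2(f(z), g(z))$ so that the expectations are well defined, which follows from $f$ and $g$ being measurable and $d_2$ being continuous (hence measurable) on the product space, and I must accept the standard caveat that $d_1$ is only a metric on equivalence classes modulo $Q$-null sets. No other subtleties arise, and together with \Cref{lemma:group-theory-lemma} applied with $d_1$ as the ground metric on the function space and $\cG$ acting via post-composition on the codomain, this lemma directly yields \Cref{proposition:stochastic-shape-metric} by taking $d_2 = \cD$, $S_2$ the space of probability measures on $\reals^n$, and $f(z) = F_i^\phi(\cdot \mid z)$.
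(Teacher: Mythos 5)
Your proposal is correct and follows essentially the same route as the paper's proof: the triangle inequality is reduced to the pointwise triangle inequality on $d_2$ followed by Minkowski's inequality for the $L^2(Q)$ norm of the random variables $d_2(f(z),h(z))$ and $d_2(h(z),g(z))$. Your handling of identity of indiscernibles is in fact slightly more careful than the paper's (which asserts positivity from a single point of disagreement rather than working modulo $Q$-null sets), but this is a refinement of the same argument, not a different one.
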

\begin{proof}
Since $d_2$ is a metric, we have $d_2(x, y) > 0$ if $x \neq y$.
Recall our assumption that the support of $Q$ equals $\cZ$.
Thus, if there exists a $z \in \cZ$ for which $f(z) \neq g(z)$, the expectation will evaluate to a positive number and we have $d_1(f, g) > 0$.
So we conclude $d_1(f, g) = 0$ if and only if $f$ and $g$ define the exact same mapping from $\cZ \mapsto S_2$.

It is also obvious that $d_1(f, g) = d_1(g, f)$, due to the symmetry of $d_2$.
Thus, it only remains to prove the triangle inequality.

Fix any function $h : \cZ \mapsto S$. Due to the triangle inequality on $d_2$, we have:
\begin{equation}
\label{eq:first-step-minkowski-lemma}
d_1(f, g) = \Big ( \, \underset{z \sim Q}{\expect} \, d^2_2(f(z), g(z)) \, \Big )^{1/2} \leq \Big ( \, \underset{z \sim Q}{\expect} \,  \big ( d_2(f(z), h(z)) + d_2(h(z), g(z)) \big )^2 \, \Big )^{1/2}     
\end{equation}
Now let $X = d_2(f(z), h(z))$ and $Y = d_2(h(z), g(z))$.
Note that $z$ is a random variable (sampled from $Q$), and so $X$ and $Y$ are also random variables.
We now recall two elementary facts: ${\Vert X \lVert_2 = (\expect [ X^2 ])^{1/2}}$ defines a norm over random variables, and ${\Vert X + Y \Vert_2 \leq \Vert X \Vert_2 + \Vert Y \Vert_2}$ for any two random variables (Minkowski's inequality).
Our definitions of $X$ and $Y$ imply that the right hand side of \cref{eq:first-step-minkowski-lemma} can be re-written as ${\Vert X + Y \Vert_2}$.
And we can therefore conclude the proof since:
\begin{equation}
d_1(f, g) \leq \Vert X + Y \Vert_2 \leq \Vert X \Vert_2 + \Vert Y \Vert_2 = d_1(f, h) + d_1(h, g) \,.
\end{equation}
\end{proof}

\textbf{Main proof.}
Let us restate and then prove \cref{proposition:stochastic-shape-metric}.
We want to show that the following:
\begin{equation}
\label{eq:stochastic-shape-metric-restated}
d(F_i, F_j) = \min_{\mbT \in \cG} ~ \bigg ( \, \underset{\mbz \sim Q}{\expect} \, \bigg [ \cD^2 \Big ( F_i^{\phi_i}(\cdot \mid \mbz) , F_j^{\phi_j}(\cdot \mid \mbz) \circ \mbT^{-1} \Big ) \bigg ] \bigg )^{1/2}
\end{equation}
is a pseudometric over stochastic networks---i.e, a pseudometric over functions $F$ that map inputs $\mbz \in \cZ$ onto probability distributions.
Recall that $F^\phi(\cdot \mid \mbz)$ is a shorthand notation for $F(\phi^{-1}(\cdot) \mid \mbz)$ where $\phi^{-1}$ is the pre-image of $\phi$.

Our key assumptions are that $\cD(\cdot, \cdot)$ is a metric over probability distributions and that $\cG$ is a group of isometry transformations with respect to this metric---i.e., for any pair of probability distributions $F$ and $G$, we have that:
\begin{equation}
\cD(F, G) = \cD(F \circ \mbT^{-1} , G \circ \mbT^{-1})
\end{equation}
for any $\mbT \in \cG$.
It is well-known that the Wasserstein distance~\citep{Villani2009} and energy distance~\citep{Sejdinovic2013,Szekely2017} are probability metrics.
Further, it is easy to show that orthogonal pushforward transformations are isometries for both metrics.
For example, we have for the 2-Wasserstein distance that:
\begin{equation}
\cW_2^2(P, Q) = \inf \, \expect \, \Vert X - Y \Vert^2 = \inf \, \expect \, \Vert \mbT X - \mbT Y \Vert^2 = \cW_2^2(P \circ \mbT^{-1}, Q \circ \mbT^{-1})
\end{equation}
for any orthogonal transformation $\mbT$.
Thus, for our purposes we can think of $\cG$ as being any subgroup of the orthogonal group.

Now that we have reminded ourselves of the main proposition, let us turn to the proof.

\begin{proof}
Let us define:
\begin{equation}
d_1(F_i^\phi, F_j^\phi) = \bigg ( \, \underset{\mbz \sim Q}{\expect} \, \bigg [ \cD^2 \Big ( F_i^\phi(\cdot \mid \mbz) , F_j^\phi(\cdot \mid \mbz) \Big ) \bigg ] \bigg )^{1/2}.
\end{equation}
Plugging this into \cref{eq:stochastic-shape-metric-restated}, we have:
\begin{equation}
\label{eq:final-step-in-main-prop}
d(F_i, F_j) = \min_{\mbT \in \cG} ~ d_1(F_i^\phi, F_j^\phi \circ \mbT^{-1}).
\end{equation}
\Cref{lemma:minkowski-lemma} tells us that $d_1$ is a metric.
Thus, \cref{lemma:group-theory-lemma} applies to \cref{eq:final-step-in-main-prop}.
This permits us to conclude that $d$ is a pseudometric and defines a metric over sets of equivalent neural representations, as claimed.
\end{proof}

\section{Practical Estimation of Stochastic Shape Metrics}
\label{sec:practical-estimation-appendix}

In both biological and artificial networks, we do not have parametric forms for the conditional distributions over neural population responses.
Instead, we can only draw samples from these distributions---e.g., by feeding an input into an artificial network and performing a stochastic forward pass, or by recording evoked spike counts to a sensory stimulus in biological data.
We consider a simple experimental setup: we are given $K$ stochastic neural networks $\{F_1, \hdots, F_K\}$, $M$ network inputs or conditions $\{\mbz_1, \hdots, \mbz_M\}$, and $L$ repeated observations or measurements of the neural responses to each input.
For example, in an artificial network that ingests image data, $M$ would denote the number of images in a test set and $L$ denotes the number of samples per image.
Let $\mbx_\ell^{(km)} \in \reals^n$ to denote sample $\ell$, from network $k$, to condition $m$.
That is,
\begin{equation}
\label{eq:stochastic-network-samples}
\mbx_\ell^{(km)} \sim F_k^\phi(\mbx \mid \mbz_m) \quad \text{i.i.d. for}~(\ell, m, k) \in  \{1, \hdots, L\} \times \{1, \hdots, M\} \times \{1, \hdots, K\}.
\end{equation}

\subsection{Metrics based on 2-Wasserstein distance and Gaussian assumption}
\label{subsec:practical-estimation-gaussian-case}

Our main assumption in this section is that distributions over neural activations are multivariate Gaussians.
That is, for each stochastic network and every input $\mbz \in \cZ$, we have ${F_i^\phi(\mbz) = \cN(\mbmu_i(\mbz), \mbSigma_i(\mbz))}$, where ${\mbmu_i : \cZ \mapsto \reals^{n}}$ and ${\mbSigma_i : \cZ \mapsto \bbS^{n \times n}}$.
If $\mbT : \reals^n \mapsto \reals^n$ is a linear pushforward map, then the pushforward measure is still Gaussian and is defined by ${F_j^\phi(\mbz) \circ \mbT^{-1} = \cN(\mbT \mbmu_j(\mbz), \mbT \mbSigma_j(\mbz) \mbT^\top)}$.

The 2-Wasserstein distance between two multivariate Gaussian distributions has a well-known closed form expression~\citep[][Remark 2.31]{Peyre2019-vo}:
\begin{equation}
\label{eq:gaussian-wasserstein-closed-form}
\cW_2 \big( \cN(\mbmu_i, \mbSigma_i), \cN(\mbmu_j, \mbSigma_j) \big ) = \big ( \Vert \mb{\mu}_i - \mb{\mu}_j \Vert^2 + \cB(\mbSigma_i, \mbSigma_j)^2 \big )^{1/2}
\end{equation}
where $\cB(\cdot, \cdot)$ is the Bures metric between positive definite matrices.
Typically one sees the Bures metric defined as:
\begin{equation}
\label{eq:bures-traditional-formulation}
\cB(\mbSigma_i, \mbSigma_j) = \left ( \, \Tr [ \mbSigma_i ] + \Tr [\mbSigma_j] - 2 \Tr [ \mbSigma_i^{1/2} \mbSigma_j \mbSigma_i^{1/2} ]^{1/2} \, \right)^{1/2}.
\end{equation}
If we use this expression, minimizing ${\cB(\mbSigma_i, \mbT \mbSigma_j \mbT^\top)}$ over nuisance transformations ${\mbT \in \cG}$ is not straightforward.\footnote{Although there are certain tricks one can exploit to compute the gradient (Newton-Schulz iterations), the constraint that $\mbT \in \cG$ is non-trivial. When $\cG$ is a continuous manifold (e.g. the orthogonal or special orthogonal group), one can resort to manifold optimization algorithms. These algorithms are somewhat cumbersome but nonetheless a plausible approach. However, even this would not cover the case where $\cG$ is a discrete set (e.g., the permutation group).}
However, an equivalent formulation of the Bures metric is:
\begin{equation}
\label{eq:bures-variational-formulation}
\cB(\mbSigma_i, \mbSigma_j) = \min_{\mbU} ~ \Vert \mbSigma_i^{1/2} - \mbSigma_j^{1/2} \mbU  \Vert_F 
\end{equation}
where the minimization is over $\mbU \in \cO(n)$.
The equivalence between \cref{eq:bures-traditional-formulation,eq:bures-variational-formulation} is already established in the literature (see Theorem 1 of~\citealt{Bhatia2019}).
For the sake of completeness we have included a proof in \cref{subsec:bures-reformulation-supplement}.

Recall that we are given sampled neural responses $\{ \mbx_\ell^{(km)} \}_{k,m,\ell}^{K, M, L}$ as specified in \cref{eq:stochastic-network-samples}.
Using these, we can estimate the mean and covariance of each distribution:
\begin{equation}
\label{eq:empirical-mean-and-covariance}
\hat{\mbmu}_k(\mbz_m) = \frac{1}{L}\sum_{\ell} \mbx_\ell^{(km)}
\quad \text{and} \quad
\hat{\mbSigma}_k(\mbz_m) = \frac{1}{L}\sum_{\ell} \mbx_\ell^{(km)} \mbx_\ell^{(km)\top}
- \hat{\mbmu}_k(\mbz_m) \hat{\mbmu}_k(\mbz_m)^\top .
\end{equation}
Here, we've used the typical maximum likelihood estimators.
However, any consistent estimator will suffice.

The proposition below summarizes the main result of this section.
Using this proposition, we produce an estimate of the distance between two stochastic networks by alternating minimization (i.e. block coordinate descent) over ${\mbT, \mbU_1, \hdots, \mbU_M}$.
Each parameter update can often be solved exactly.
For example, we typically consider the case of orthogonal nuisance transformations, i.e. $\cG = \cO(n)$, in which case all parameter updates correspond to solving an orthogonal Procrustes problem~\citep{Gower2004}.
Further, the minimizations over $\{\mbU_1, \hdots, \mbU_M\}$ can be done in parallel.

\begin{proposition}
\label{proposition:gaussian-shape-metric}
If $F_i^\phi(\mbz)$ and $F_j^\phi(\mbz)$ are both Gaussian for all $\mbz \in \cZ$, then:
\begin{equation*}
\hat{d}(F_i, F_j) = \min_{\mbT, \mbU_1, \hdots, \mbU_M} \Big ( \tfrac{1}{M} \sum_{m=1}^M \Vert \hat{\mbmu}_i(\mbz_m) - \mbT \hat{\mbmu}_j(\mbz_m) \Vert^2 + \Vert \hat{\mbSigma}_i(\mbz_m)^{1/2} - \mbT \hat{\mbSigma}_j(\mbz_m)^{1/2} \mbU_m \Vert_F^2 \Big )^{1/2}
\end{equation*}
is a consistent estimator of a stochastic shape distance (eq.~\ref{eq:stochastic-shape-metric}) with the 2-Wasserstein distance used as the ``ground metric.'' The minimization in the above equation is performed over $\mbT \in \cG$ and $\mbU_m \in \cO(n)$ for all $m \in \{1, \hdots, M\}$.
\end{proposition}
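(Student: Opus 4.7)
The plan is to derive the displayed estimator by substituting the closed-form expression for the 2-Wasserstein distance between Gaussians (\cref{eq:gaussian-wasserstein-closed-form}) into the stochastic shape metric (\cref{eq:stochastic-shape-metric}), then invoking the variational formulation of the Bures metric (\cref{eq:bures-variational-formulation}) to turn the intractable matrix-trace form into a Frobenius alignment problem, and finally appealing to a consistency argument for the plug-in estimators of means and covariances.

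First I would unpack the stochastic shape metric at the population level. Since $\cG$ must act by isometries of $\cW_2$, it consists of orthogonal pushforwards, so for any $\mbT \in \cG$ the pushforward measure is $F_j^\phi(\cdot \mid \mbz) \circ \mbT^{-1} = \cN(\mbT\mbmu_j(\mbz), \mbT\mbSigma_j(\mbz)\mbT^\top)$. Plugging this into \cref{eq:gaussian-wasserstein-closed-form} gives
\begin{equation*}
\cW_2^2\bigl(F_i^\phi(\cdot\mid\mbz),\, F_j^\phi(\cdot\mid\mbz)\circ\mbT^{-1}\bigr)
= \Vert \mbmu_i(\mbz) - \mbT\mbmu_j(\mbz) \Vert^2 + \cB\bigl(\mbSigma_i(\mbz),\, \mbT\mbSigma_j(\mbz)\mbT^\top\bigr)^2.
\end{equation*}
Next I would rewrite the Bures term via \cref{eq:bures-variational-formulation}. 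Because $\mbT$ is orthogonal, $\mbT\mbSigma_j(\mbz)^{1/2}$ is a valid (non-principal) square root of $\mbT\mbSigma_j(\mbz)\mbT^\top$, and the minimization over orthogonal matrices in \cref{eq:bures-variational-formulation} absorbs the ambiguity in the choice of root. This yields a per-$\mbz$ inner minimization
\begin{equation*}
\cB\bigl(\mbSigma_i(\mbz),\, \mbT\mbSigma_j(\mbz)\mbT^\top\bigr)^2 = \min_{\mbU(\mbz)\in\cO(n)} \Vert \mbSigma_i(\mbz)^{1/2} - \mbT\mbSigma_j(\mbz)^{1/2}\mbU(\mbz) \Vert_F^2.
\end{equation*}

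Now I would take the expectation over $\mbz \sim Q$ and observe that since the rotations $\mbU(\mbz)$ appear only inside a single squared Frobenius term with no coupling across $\mbz$, the minimization can be interchanged with the expectation. Replacing $Q$ with its empirical counterpart on the $M$ observed inputs $\{\mbz_1,\dots,\mbz_M\}$ transforms this expectation into the sample average, and the per-input rotations become the finite collection $\mbU_1,\dots,\mbU_M$. Pulling the outer minimization over $\mbT \in \cG$ back out recovers exactly the joint optimization problem displayed in the proposition, with population means and covariances in place of the sample estimators.

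Finally, I would argue consistency: replacing $(\mbmu_k(\mbz_m),\mbSigma_k(\mbz_m))$ with the sample moments $(\hat\mbmu_k(\mbz_m),\hat\mbSigma_k(\mbz_m))$ of \cref{eq:empirical-mean-and-covariance} gives a plug-in estimator. By the strong law of large numbers these converge almost surely to the true moments as $L\to\infty$; the matrix square root is continuous on the positive semidefinite cone, and both the Euclidean and Bures terms are jointly continuous in their arguments; and the minimum of a continuous function over a compact parameter set (for $\cG$ a compact subgroup of $\cO(n)$, and for $\mbU_m \in \cO(n)$) is itself continuous by a standard Berge / continuous mapping argument. Thus $\hat d(F_i,F_j) \to d(F_i,F_j)$ almost surely, yielding consistency. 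The step I expect to require the most care is the reduction from the symmetric-matrix square root of $\mbT\mbSigma_j\mbT^\top$ to the factored form $\mbT\mbSigma_j^{1/2}\mbU_m$: one must verify that the orthogonal ambiguity in the square root of $\mbT\mbSigma_j\mbT^\top$ is precisely what the $\mbU_m$ parameterize, so that no valid alignments are excluded and no spurious ones are introduced.
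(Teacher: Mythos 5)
Your proposal is correct and follows essentially the same route as the paper's proof: substitute the Gaussian closed form for $\cW_2$, invoke the variational (Procrustes) formulation of the Bures metric, exchange the per-input minimizations with the empirical average, and appeal to the law of large numbers for consistency. The only cosmetic difference is in absorbing the orthogonal conjugation of the covariance: the paper writes the principal root $\mbT\mbSigma_j^{1/2}\mbT^\top$ and performs the explicit change of variables $\mbU_m = \mbT^\top\widetilde{\mbU}_m$, whereas you use the non-principal root $\mbT\mbSigma_j^{1/2}$ and note that the $\mbU_m$-minimization is invariant to the choice of square root --- these are equivalent, and your consistency argument is if anything more carefully justified than the paper's.
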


\begin{proof}
Plugging \cref{eq:gaussian-wasserstein-closed-form,eq:bures-variational-formulation} into our definition of stochastic distance (eq.~\ref{eq:stochastic-shape-metric} from \cref{proposition:stochastic-shape-metric}), we have:
\begin{equation}
d(F_i, F_j) = \min_{\mbT} \Big ( \underset{\mbz \sim Q}{\expect} \Vert \mbmu_i(\mbz) - \mbT \mbmu_j(\mbz) \Vert^2 + \min_{\mbU} \Vert \mbSigma_i(\mbz)^{1/2} - \mbT \mbSigma_j(\mbz)^{1/2} \mbT^\top \mbU \Vert_F^2 \Big )^{1/2}.
\end{equation}
Given $M$ i.i.d. samples $\mbz_m \sim Q$ for ${m \in \{1, \hdots, M\}}$, we can estimate the expectation with an empirical average:
\begin{equation}
\min_{\mbT} \Big ( \tfrac{1}{M} \sum_{m=1}^M \Vert \mbmu_i(\mbz_m) - \mbT \mbmu_j(\mbz_m) \Vert^2 + \min_{\widetilde{\mbU}_m} \Vert \mbSigma_i(\mbz_m)^{1/2} - \mbT \mbSigma_j(\mbz_m)^{1/2} \mbT^\top \widetilde{\mbU}_m  \Vert_F^2 \Big )^{1/2}
\end{equation}
Next, we pull out the minimization over each $\widetilde{\mbU}_m \in \cO(n)$ outside the sum.
Additionally, since $\cG$ is a group of isometries on $\reals^n$, we know that $\cG$ is a subgroup of the orthogonal group.
Thus, every $\mbT \in \cG$ is an orthogonal matrix, so $\mbT^\top \widetilde{\mbU}_m$ is also an orthogonal matrix.
Thus we introduce a change of variables $\mbU_m = \mbT^\top \widetilde{\mbU}_m$ and minimize over $\mbU_m \in \cO(n)$, as this attains the same minimum value.
In summary, we have:
\begin{equation}
\min_{\mbT, \mbU_1, \hdots, \mbU_M} \Big ( \tfrac{1}{M} \sum_{m=1}^M \Vert \mbmu_i(\mbz_m) - \mbT \mbmu_j(\mbz_m) \Vert^2 +  \Vert \mbSigma_i(\mbz_m)^{1/2} - \mbT \mbSigma_j(\mbz_m)^{1/2} \mbU_m  \Vert_F^2 \Big )^{1/2}.
\end{equation}
The only remaining step is to replace every $\mbmu(\mbz)$ and $\mbSigma(\mbz)$ with some consistent estimator, such as the empirical mean and covariance (see eq.~\ref{eq:empirical-mean-and-covariance}).
In the limit as $M \rightarrow \infty$ and $L \rightarrow \infty$, we have convergence to the true distance due to the law of large numbers.
\end{proof}

\subsubsection{Algorithmic complexity and computational considerations}
\label{subsec:w2-algorithmic-complexity}

To compute distances using the 2-Wasserstein ground metric between two Gaussian-distributed stochastic network representations (Eq.~\ref{eq:wasserstein-gaussian-form}), we used closed form updates of the orthogonal procrustes problem~\citep{Gower2004} for $\mbT \in \cO$ and $\{\mbU_m\}_{m=1}^M$ in alternation, using $S$ iterations.
Importantly, $\mbT$ and each $\mbU_m$ can be solved exactly at each alternation (described above).
For $K$ stochastic networks, we must consider $\mathcal{O}(K^2)$ total pairwise comparisons.

Computing the optimal $\mbT$ at each step involves aligning two stochastic representations, each comprising a stacked matrix of $M$ $n$-dimensional means and $M$ $n \times n$ covariances using Procrustes alignment~\citep{Gower2004}.
This involves a matrix multiplication and singular value decomposition with $\mathcal{O}(Mn^3)$ combined complexity, assuming $n<M$.
Similarly, at each step, computing the Bures metric requires solving for $M$ $n \times n$ orthogonal matrices, $\{\mbU_m\}_{m=1}^M$, with total complexity $\mathcal{O}(Mn^3)$.
Thus, the total algorithmic worst-case time complexity is $\mathcal{O}(K^2 S M n^3)$.

Notably, this computation is highly parallelizable over the $K^2$ pairwise comparisons.
For the VAE and patch-Gaussian results in the main text (Figures \ref{fig:vae-results} and \ref{fig:data-aug-results}), we distributed the distance matrix calculation by distributing pairwise comparisons over single CPU cores, with each comparison taking a few seconds to complete.

\subsection{Metrics based on Energy distance}
\label{subsec:practical-estimation-energy-distance}

This section outlines an alternative measure of stochastic representational distance that does not require any parametric assumption (e.g. Gaussian) on stochastic neural responses.
We use \textit{energy distance}, $\cE_q$ defined in \cref{eq:energy-distance}, as the ground metric appearing in \cref{proposition:stochastic-shape-metric}.
As explained in the main text, this distance has favorable estimation properties in high dimensional spaces relative to the Wasserstein distances.
It remains an open problem to develop estimation procedures for Wasserstein-based stochastic shape distances without the assumption of Gaussianity.

To compute the stochastic shape distance, we need to solve the following optimization problem:
\begin{equation}
\label{eq:minimization-problem-for-energy-distance}
\underset{\mbT \in \cG}{\textrm{argmin}} ~ \underset{\mbz \sim Q}{\expect} \, \bigg [ \cE^2_q \Big ( F_i^\phi(\mbz) , F_j^\phi (\mbz) \circ \mbT^{-1} \Big ) \bigg ]
\end{equation}
Note that we have squared the expression occurring in the main proposition---i.e., we have dropped the $(\cdot)^{1/2}$ operation---as this does not effect the optimal alignment transformation.

First, let's focus on the innermost term inside the expectation of \cref{eq:minimization-problem-for-energy-distance}.
Let ${X_i, X_i^\prime \sim F_i^\phi(\cdot \mid \mbz)}$ and ${X_j, X_j^\prime \sim F_j^\phi(\cdot \mid \mbz)}$, independently and treating the input $\mbz$ as fixed for now.
Now, since $\cG$ is a group of isometries with respect to the Euclidean norm, we have: 
\begin{align*}
\cE_q^2(F_i^\phi(\cdot \mid \mbz), F_j^\phi(\cdot \mid \mbz) \circ \mbT^{-1}) &= \expect \Vert X_i - \mbT X_j \Vert^q - \tfrac{1}{2}\expect \Vert X_i - X_i^\prime \Vert^q - \tfrac{1}{2} \expect \Vert \mbT X_j - \mbT X_j^\prime \Vert^q \\
&= \expect \Vert X_i - \mbT X_j \Vert^q - \tfrac{1}{2}\expect \Vert X_i - X_i^\prime \Vert^q - \tfrac{1}{2} \expect \Vert X_j - X_j^\prime \Vert^q
\end{align*}
The final two terms are constant with respect to $\mbT$, so we can drop them from the objective function without effecting the result.
Thus, \cref{eq:minimization-problem-for-energy-distance} can be simplified to:
\begin{equation}
\label{eq:simplified-minimization-problem-for-energy-distance}
\underset{\mbT \in \cG}{\textrm{argmin}} ~ \underset{\mbz \sim Q}{\expect} \, \bigg [ \expect \Vert X_i - \mbT X_j \Vert^q \bigg ]
\end{equation}
Here, the outer expectation is over network inputs $\mbz$ and the inner expectation is over conditional distributions,  $X_i \sim F_i^\phi(\cdot \mid \mbz)$ and $X_j \sim F_j^\phi(\cdot \mid \mbz)$.

Recall again that we are given sampled responses $\{ \mbx_\ell^{(km)} \}_{k,m,\ell}^{K, M, L}$ as specified in \cref{eq:stochastic-network-samples}.
To construct a consistent estimator, we evoke the law of large numbers to replace the expectations with empirical averages.
\Cref{eq:simplified-minimization-problem-for-energy-distance} becomes:
\begin{equation}
\label{eq:practical-minimization-problem-for-energy-distance}
\underset{\mbT \in \cG}{\textrm{argmin}} ~ \frac{1}{ML^2}\sum_{m=1}^M \sum_{\ell=1}^L \sum_{p=1}^L \Vert \mbx_\ell^{(im)} - \mbT \mbx_p^{(jm)} \Vert^q
\end{equation}
When $q=2$, the optimal $\mbT \in \cG$ can often be identified efficiently---e.g., by solving a Procrustes problem when $\cG = \cO$~\citep{Gower2004} or a linear assignment problem when $\cG = \cP$~\citep{Burkard2012}.
However, when $q=2$ the stochastic shape distance only depends on the mean and is insensitive to higher-order moments of the neural response (see \cref{subsec:energy-distance-q-equals-two})
In the more interesting case where $q \neq 2$, we can use iteratively re-weighted least squares~\citep{Kuhn1973-nk} to identify the solution.\footnote{One could alternatively consider using manifold optimization methods when $\cG$ is a continuous manifold. However, these methods are somewhat cumbersome and aren't easy to extend to the case where $\cG$ is a discrete set, such as the set of all permutations.}
Details of this well-known algorithm are provided in \cref{subsec:irls-supplement}.

Now, let $\mbT^* \in \cG$ be the solution to \cref{eq:practical-minimization-problem-for-energy-distance}.
Using this it is straightforward to estimate the desired stochastic shape distance.
Our estimate of $d(F_i, F_j)$ is:
\begin{equation*}
\tfrac{1}{m} \sum_m \bigg ( \tfrac{1}{L^2} \sum_{\ell, p} \Vert \mbx_\ell^{(im)} - \mbT^* \mbx_p^{(jm)} \Vert^q - \tfrac{1}{L(L-1)} \sum_{\ell > p} \Vert \mbx_\ell^{(im)} - \mbx_p^{(im)} \Vert^q - \tfrac{1}{L(L-1)} \sum_{\ell > p} \Vert \mbx_\ell^{(jm)} - \mbx_p^{(jm)} \Vert^q \bigg )
\end{equation*}
where the sums over $\ell > p$ are over all $L(L-1)/2$ pairwise combinations between $L$ sampled activations.
Each of the three terms in the expression above is a consistent (though not unbiased) estimator of its corresponding term in definition of energy distance (eq.~\ref{eq:energy-distance}).
However, if the final two terms above are over-estimated in magnitude and the first term is under-estimated, the overall estimate of $d(F_i, F_j)$ may be negative.
This violates perhaps the most important property of a metric space that distances should be nonnegative.
Triangle inequality violations are also possible.

We propose a simple fix using basic ideas from the literature on \textit{metric repair}~\citep{Brickell2008}.
Given a collection of $K$ networks, we use the procedure above to compute an estimate of the $K \times K$ distance matrix $\widetilde{\mbD}$ where ${\widetilde{\mbD}_{ij} \approx d(F_i, F_j)}$.
We then find the matrix $\mbD^*$ that is closest to our estimate $\widetilde{\mbD}$ according a quadratic loss, and which satisfies all the axioms of a metric space.
This amounts to solving a quadratic program, as detailed in \cref{subsec:metric-repair-supplement}.

\section{Interpolated Metrics Between Mean-Sensitive and Covariance-Sensitive Distances}
\label{sec:interpolated-metric-supplement}

\subsection{Proof that $\overline{\cW}_2^\alpha$ is a metric}

We start by proving a well-known and basic lemma, which states that the $\ell_p$ norm of a collection of metrics also defines a metric.

\begin{lemma}
\label{lemma:norm-of-metrics}
Let $d_1, \hdots, d_n$ be a collection of metrics on a set $\cS$.
Then, for any $p > 1$,
\begin{equation}
d(x,y) = \sqrt[\leftroot{-2}\uproot{2}p]{d_1(x, y)^p + \hdots + d_n(x, y)^p }
\end{equation}
is a metric on $\cS$.
\end{lemma}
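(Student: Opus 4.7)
The plan is to verify the four metric axioms for $d$, relying on the fact that each $d_i$ already satisfies them and that the map $(t_1, \ldots, t_n) \mapsto (t_1^p + \cdots + t_n^p)^{1/p}$ is the $\ell_p$ norm on $\reals^n$. The first three axioms will be essentially immediate; the triangle inequality will be the main obstacle, and I plan to reduce it to Minkowski's inequality.

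First I would dispose of the easy axioms. Nonnegativity follows because each $d_i(x,y) \geq 0$, hence $d(x,y) \geq 0$. For the identity of indiscernibles, $d(x,y) = 0$ forces each $d_i(x,y) = 0$ (since each term in the sum is nonnegative and the $p$-th root is zero only at zero), and then $x = y$ because each $d_i$ is itself a metric; conversely if $x = y$ then each term vanishes and $d(x,y) = 0$. Symmetry is immediate from the symmetry of every $d_i$.

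The main step is the triangle inequality $d(x,z) \leq d(x,y) + d(y,z)$. I would introduce the nonnegative vectors $u, v, w \in \reals^n$ with $u_i = d_i(x,y)$, $v_i = d_i(y,z)$, and $w_i = d_i(x,z)$. The triangle inequality for each individual metric gives $w_i \leq u_i + v_i$ componentwise. Since all three vectors have nonnegative entries, the $\ell_p$ norm is monotone in each coordinate, which yields
\begin{equation}
d(x,z) = \|w\|_p \leq \|u + v\|_p.
\end{equation}
Applying Minkowski's inequality $\|u + v\|_p \leq \|u\|_p + \|v\|_p$ (valid for $p \geq 1$) then gives $d(x,z) \leq d(x,y) + d(y,z)$, as required.

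The only subtle point, and the one I would flag as the "main obstacle," is the combination of componentwise monotonicity of the $\ell_p$ norm with Minkowski's inequality: the per-coordinate triangle inequality alone does not immediately give the norm inequality, but monotonicity on the nonnegative orthant bridges the gap. Everything else is a routine unpacking of the definition.
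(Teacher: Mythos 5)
Your proof is correct and follows essentially the same route as the paper's: both reduce the triangle inequality to componentwise monotonicity of the $\ell_p$ norm on the nonnegative orthant combined with Minkowski's inequality, after dispatching the remaining axioms as immediate. No gaps.
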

\begin{proof}
It is obvious that $d(x, y) = 0$ if and only if ${d_1(x, y) = \hdots = d_n(x, y) = 0}$ and that $d(x, y) = d(y, x)$. So it is only non-trivial to prove the triangle inequality.

Let $\mbd(x, y)$ denote the vector in $\reals^n$ holding each distance. That is:
\begin{equation}
\mbd(x, y) = \begin{bmatrix}
d_1(x, y) \\
\vdots \\
d_n(x, y)
\end{bmatrix}
\end{equation}
The triangle inequality now follows from:
\begin{align}
d(x,y) = \sqrt[\leftroot{-2}\uproot{2}p]{d_1(x, y)^p + \hdots + d_n(x, y)^p } &= \Vert \mbd(x, y) \Vert_p\\
&\leq \Vert \mbd(x, m) + \mbd(m, y) \Vert_p \\
&\leq \Vert \mbd(x, m) \Vert_p + \Vert \mbd(m, y) \Vert_p \\
&= d(x, m) + d(m, y)
\end{align}
for all $x \in \cS$, $y \in \cS$, and $m \in \cS$.
The first inequality follows from the triangle inequality on each ${d_1, \hdots, d_n}$ and then from $\Vert \cdot \Vert_p$ being a monotonically increasing function of each vector coordinate.
The second inequality follows from the sub-additivity property of all norms.
\end{proof}

This lemma provides further perspective on the closed form expression we gave in \cref{subsec:practical-estimation-gaussian-case} for the 2-Wasserstein distance between Gaussian distributions.
In particular, if $P_i = \cN(\mbmu_i, \mbSigma_i)$ and $P_j = \cN(\mbmu_j, \mbSigma_j)$, we saw in \cref{eq:gaussian-wasserstein-closed-form} that:
\begin{equation}
\cW_2 \big( P_i, P_j \big ) = \big ( d_{\mbmu}^2 (\mbmu_i, \mbmu_j) + d_{\mbSigma}^2(\mbSigma_i, \mbSigma_j) \big )^{1/2}
\end{equation}
where $d_{\mbmu}^2$ is the squared Euclidean distance between the means and $d_{\mbSigma}^2$ is the squared Bures metric between the covariances.
Thus, the 2-Wasserstein distance between Gaussians can be intuitively thought of as the $\ell_2$ norm of this pair of metrics.

It is trivial to verify that all the properties of a metric are preserved the distance is multiplied a scalar $\alpha > 0$.
That is, if $g$ is a metric, then $d(x, y) = \alpha g(x, y)$ is also a metric.
Combining this  with \cref{lemma:norm-of-metrics} it is obvious that,
\begin{equation}
\overline{\cW}_2^\alpha \big( P_i, P_j \big ) = \big ( \alpha \cdot d_{\mbmu}^2 (\mbmu_i, \mbmu_j) + (2 - \alpha) \cdot  d_{\mbSigma}^2(\mbSigma_i, \mbSigma_j) \big )^{1/2}
\end{equation}
which is simply a re-statement of \cref{eq:interpolated-wasserstein}, is a metric for any $0 < \alpha < 2$.

\subsection{Lower Bound on Interpolated Shape Distances}

We now derive a simple lower bound on stochastic shape distances when $\overline{\cW}_2^\alpha$ is used as the ground metric.
Let $d^2_\alpha$ denote the squared shape distance of interest, for any chosen $0 \leq \alpha \leq 2$.
We have:
\begin{align*}
d^2_\alpha(F_i, F_j) &= \min_{\mbT \in \cG} ~ \expect_{\mbz} \bigg [ (\overline{\cW}_2^\alpha)^2 \big( F_i^\phi(\cdot \mid \mbz), F_j^\phi(\cdot \mid \mbz) \circ \mbT^{-1} \big ) \bigg] \\
&= \min_{\mbT \in \cG} ~ \expect_{\mbz} \bigg [ \alpha \cdot d_{\mbmu}^2 (\mbmu_i(\mbz), \mbT \mbmu_j(\mbz)) + (2 - \alpha) \cdot  d_{\mbSigma}^2(\mbSigma_i(\mbz), \mbT \mbSigma_j(\mbz) \mbT^\top) \bigg ] \\
&\geq \alpha \cdot \min_{\mbT_{\mbmu} \in \cG} \expect_{\mbz} \bigg [ d_{\mbmu}^2 (\mbmu_i(\mbz), \mbT_{\mbmu} \mbmu_j(\mbz)) \bigg ] + (2 - \alpha) \cdot \min_{\mbT_{\mbSigma} \in \cG} \expect_{\mbz} \bigg [  d_{\mbSigma}^2(\mbSigma_i(\mbz), \mbT_{\mbSigma} \mbSigma_j(\mbz) \mbT^\top_{\mbSigma} ) \bigg ]
\end{align*}
The inequality here follows from the linearity of expectation and then from separately minimizing the two terms.
The inequality is tight if the optimal value of $\mbT_{\mbmu}$ equals the optimal value of $\mbT_{\mbSigma}$.
Furthermore, the two minimized terms in the final expression are proportional to the squared shape distance when $\alpha = 2$ and $\alpha = 0$, respectively:
\begin{gather}
\min_{\mbT \in \cG} ~ \expect_{\mbz} \bigg [ d_{\mbmu}^2 (\mbmu_i(\mbz), \mbT \mbmu_j(\mbz)) \bigg ] = \frac{1}{2} \cdot d^2_{\alpha=2}(F_i, F_j)
\\
\min_{\mbT \in \cG} \expect_{\mbz} \bigg [ d_{\mbSigma}^2(\mbSigma_i(\mbz), \mbT_{\mbSigma} \mbSigma_j(\mbz) \mbT^\top_{\mbSigma} ) \bigg ] = \frac{1}{2} \cdot d^2_{\alpha=0}(F_i, F_j)
\end{gather}
Thus, in summary we have:
\begin{align}
d^2_\alpha(F_i, F_j) \geq
\frac{\alpha}{2} \cdot d^2_{\alpha=2}(F_i, F_j) + \frac{2 - \alpha}{2} \cdot d^2_{\alpha=0}(F_i, F_j) \\
\Rightarrow ~ d_\alpha(F_i, F_j) \geq \sqrt{\frac{\alpha}{2} \cdot d^2_{\alpha=2}(F_i, F_j) + \frac{2 - \alpha}{2} \cdot d^2_{\alpha=0}(F_i, F_j)}
\end{align}

\subsection{Interpretation of $\overline{\cW}_2^\alpha$ when Gaussian assumption is violated}
\label{subsec:gaussian-violation-supplemental-discussion}

When neural responses are Gaussian-distributed, then $\overline{\cW}_2^\alpha$ can be interpreted as a natural extension of 2-Wasserstein distance (see \cref{subsec:methods-interpolation}).
What if neural responses are \textit{not} Gaussian-distributed?
Concretely, consider two distributions $P_i$ and $P_j$, which are not necessarily Gaussian.
We can still define the first two moments (mean and covariance) of these distributions:
\begin{equation}
\mbmu_i = \expect_{\mbx \sim P_i}[\mbx] \quad \text{and} \quad \mbSigma_i = \expect_{\mbx \sim P_i}[(\mbx - \mbmu_i) (\mbx - \mbmu_i)^\top] \, .
\end{equation}
Using these, we can still compute $\overline{\cW}_2^{\alpha}(P_i, P_j)$ as before.

However, it is no longer the case that this calculation will coincide with the 2-Wasserstein distance between $P_i$ and $P_j$.
That is, since the Gaussian assumption is unreliable, ${\overline{\cW}_2^{\alpha=1}(P_i, P_j) \neq \cW_2(P_i, P_j)}$.
Because of this, we can no longer conceptualize $\overline{\cW}_2^{\alpha}$ as the amount of energy taken to transport $P_i$ onto $P_j$ with the parameter $\alpha$ differentially weighting the cost of transporting mass due to mismatches in the mean and covariance.

On the other hand, $\overline{\cW}_2^{\alpha}$ may still be a reasonable ground metric in many practical circumstances.
In particular, it is obvious that ${\overline{\cW}_2^{\alpha} (P_i, P_j) = 0}$ if and only if the mean and covariance of these distributions match.
Thus, it is a pseudometric over all probability distributions and a metric on equivalence classes defined by the equivalence relation ${P_i \sim P_j}$ if and only if ${\mbmu_i = \mbmu_j}$ and ${\mbSigma_i = \mbSigma_j}$.
From this, it is easy to show that the stochastic shape metric (eq.~\ref{eq:stochastic-shape-metric}) based on this ground metric also satisfies the metric space axioms, including the triangle inequality.

In high-dimensional datasets, it is often challenging to estimate and interpret the higher-order statistical moments of a distribution.
In the setting of comparing stochastic neural representations, one could argue that it is reasonable to settle for a ground metric that is insensitive to these higher-order moments.
From this perspective, $\overline{\cW}_2^{\alpha}$ belongs to a larger family of ground metrics that can be expressed:
\begin{equation}
\cD(P_i, P_j) = (d_{\mbmu}^2 (\mbmu_i, \mbmu_j) + d_{\mbSigma}^2 (\mbSigma_i, \mbSigma_j) )^{1/2}
\end{equation}
for some chosen metric on the means, $d_{\mbmu}$, and another chosen metric on the covariances, $d_{\mbSigma}$.
Again, no assumption on whether $P_i$ and $P_j$ being Gaussian is strictly necessary.
A more thorough exploration of these alternative ground metrics is a potential direction of future research.

\section{Miscellaneous Theory and Background}

\subsection{Energy distance as a trial-averaged shape metric when $q=2$}
\label{subsec:energy-distance-q-equals-two}

Performing representational dissimilarity analysis on trial-average activity measurements is already common practice in neuroscience.
Here, we show that this approach arises as a special case of the stochastic shape distances explored in this manuscript.
When $q=2$, the energy distance is given by:
\begin{equation}
\label{eq:energy-distance-q-equals-two}
\cE_2(P, Q) = (\expect \Vert X - Y \Vert^2 - \tfrac{1}{2}\expect \Vert X - X^\prime \Vert^2 - \tfrac{1}{2} \expect \Vert Y - Y^\prime \Vert^2)^{1/2}
\end{equation}
Since $X$ and $X^\prime$ are independent and identically distributed random variables, we have:
\begin{align}
\tfrac{1}{2}\expect \Vert X - X^\prime \Vert^2 &= \tfrac{1}{2}\expect [X^\top X] + \tfrac{1}{2}\expect [X^{\prime \top} X^\prime] - \expect[ X^\top X^\prime] \\
&= \expect [X^\top X] - \expect[ X^\top X^\prime] \\
&= \expect [X^\top X] - \expect[X]^\top \expect[X]
\end{align}
Likewise,
\begin{equation}
\tfrac{1}{2} \expect \Vert Y - Y^\prime \Vert^2 = \expect [Y^\top Y] - \expect[Y]^\top \expect[Y].
\end{equation}
Plugging these expressions into \cref{eq:energy-distance-q-equals-two} and simplifying we see that:
\begin{align*}
\cE_2(P, Q) &= (\expect \Vert X - Y \Vert^2 - \expect [X^\top X] +
\expect[X]^\top \expect[X] - \expect[Y^\top Y] + \expect[Y]^\top \expect[Y])^{1/2} \\
&= (\cancel{\expect [X^\top X]} + \cancel{\expect [Y^\top Y]} - 2\expect[X^\top Y] - \cancel{\expect [X^\top X]} + \expect[X]^\top \expect[X] - \cancel{\expect [Y^\top Y]} + \expect[Y]^\top \expect[Y])^{1/2} \\
& = (\expect[X]^\top \expect[X] + \expect[Y]^\top \expect[Y] - 2\expect[X^\top Y])^{1/2} \\
& = ( \Vert \expect[X] - \expect[Y] \Vert^2 )^{1/2} \\
& = \Vert \expect[X] - \expect[Y] \Vert
\end{align*}

To summarize, we have shown that the $q=2$ energy distance between a distribution $P$ and $Q$ is equal to the Euclidean distance between the mean of $P$ and the mean of $Q$.
If we use this energy distance as the ground metric, $\cD$, in \cref{proposition:stochastic-shape-metric} to construct a stochastic shape distance, we are essentially calculating a deterministic shape distance\footnote{Specifically, see the distances covered under Proposition 1 in \citet{Williams_etal_2021_shapes}.} on the mean response patterns.

\subsection{Iteratively Reweighted Least Squares}
\label{subsec:irls-supplement}

Fix $q$ to be a value on the open interval $(0, 2)$ and consider the following optimization problem:
\begin{equation}
\label{eq:irls-optimization-problem}
\min_{\mbT \in \cG} ~ \bigg \{ f(\mbT) = \sum_{i=1}^N \Vert \mby_i - \mbT \mbx_i \Vert^q \bigg \}
\end{equation}
It is easy to see that \cref{eq:practical-minimization-problem-for-energy-distance} is an instance of this problem for a particular choice of vectors $\mbx_i \in \reals^n$ and $\mby_i \in \reals^n$.

Our key assumption will be that we can efficiently solve the following weighted least squares problem:
\begin{equation}
\label{eq:irls-weighted-least-squares}
\min_{\mbT \in \cG} ~ \sum_{i=1}^N w_i \Vert \mby_i - \mbT \mbx_i \Vert^2
\end{equation}
for any choice of weightings, $w_1, \hdots, w_N$.
Again, this is possible when $\cG$ is the orthogonal group (Procrustes problem) or the permutation group (linear assignment).

Iteratively re-weighted least squares algorithms are a family of methods that are encompassed by the even larger family of majorize-minimization algorithms \citep{Lange2016}.
The specific method we deploy can be viewed as an extension to Weiszfeld's algorithm~\citep{Kuhn1973-nk}.
Our starting point is to recognize that the function ${s \mapsto s^{q/2}}$ is concave for $0 < q < 2$ and $s \geq 0$.
Thus, we can derive an upper bound using the first-order Taylor expansion:
\begin{equation}
\label{eq:concavity-trick-for-irls}
(s + \delta)^{q/2} \leq s^{q/2} + \delta \left ( \frac{\textrm{d}}{\textrm{d}s} s^{q/2} \right ) = s^{q/2} + \frac{q}{2} \left ( \frac{\delta}{s^{(1 - q/2)}} \right ),
\end{equation}
for any $\delta$ such that $s + \delta \geq 0$.

We will now use this fact to derive an upper bound on the objective function in \cref{eq:irls-optimization-problem}.
Let $\mbT^{(t)} \in \cG$ represent our estimate of the optimal $\mbT \in \cG$ after $t$ iterations of our algorithm, and let $\mbT \in \cG$ denote any feasible transformation.
Then, for $i \in \{1, \hdots, N\}$, define:
\begin{align}
s_i^{(t)} &= \Vert \mby_i - \mbT^{(t)}\mbx_i \Vert^2 \\
\delta_i^{(t)} &= \Vert \mby_i - \mbT \mbx_i \Vert^2 - s_i^{(t)}
\end{align}
Notice that these definitions imply $s_i^{(t)} + \delta_i^{(t)} \geq 0$.
Now, plugging into \cref{eq:concavity-trick-for-irls}, we have:
\begin{equation}
\left ( s_i^{(t)} + \delta_i^{(t)} \right )^{q/2} = \Vert \mby_i - \mbT \mbx_i \Vert^q \leq \left (s_i^{(t)} \right )^{q/2} + \frac{q}{2} \left ( \frac{\delta_i^{(t)}}{\left (s_i^{(t)} \right )^{(1 - q/2)}} \right )
\end{equation}
This is an upper bound for each term in the sum of the original objective function.
Therefore, plugging in the definitions of $s_i^{(t)}$ and $\delta_i^{(t)}$, we have:
\begin{align}
f(\mbT) = \sum_{i=1}^N \Vert \mby_i - \mbT \mbx_i \Vert^q &\leq \sum_{i=1}^N \left ( \Vert \mby_i - \mbT^{(t)} \mbx_i \Vert^2 \right )^{q/2} + \frac{q}{2} \left ( \frac{\Vert \mby_i - \mbT \mbx_i \Vert^2 - s_i^{(t)}}{\left (\Vert \mby_i - \mbT^{(t)} \mbx_i \Vert^2 \right )^{(1 - q/2)}} \right ) \\
&= \sum_{i=1}^N \Vert \mby_i - \mbT^{(t)} \mbx_i \Vert^q + \frac{q}{2} \left ( \frac{\Vert \mby_i - \mbT \mbx_i \Vert^2 - \Vert \mby_i - \mbT^{(t)} \mbx_i \Vert^2}{ \Vert \mby_i - \mbT^{(t)} \mbx_i \Vert^{2-q}} \right )
\label{eq:irls-ugly-surrogate-function}
\\
&\triangleq Q(\mbT \mid \mbT^{(t)})
\end{align}
Here, we view $Q(\mbT \mid \mbT^{(t)})$ as a function of $\mbT$---i.e. $\mbT^{(t)}$ is fixed.
The calculations above show that $Q(\mbT \mid \mbT^{(t)})$ provides an upper bound on the objective function for any $\mbT \in \cG$.
Furthermore, it is easy to check that ${f(\mbT^{(t)}) = Q(\mbT^{(t)} \mid \mbT^{(t)})}$---i.e., the upper bound is tight at $\mbT = \mbT^{(t)}$.

We now have all the necessary ingredients to derive an algorithm.
We start by initializing $\mbT^{(1)} \in \cG$ by some method.
Then we compute ${\{ \mbT^{(2)}, \mbT^{(3)}, \hdots \} }$ iteratively according to: 
\begin{equation}
\mbT^{(t+1)} = \underset{\mbT \in \cG}{\text{argmin}} ~ Q(\mbT \mid \mbT^{(t)}) = \underset{\mbT \in \cG}{\text{argmin}} ~ \sum_{i=1}^N \frac{\Vert \mby_i - \mbT \mbx_i \Vert^2}{ \Vert \mby_i - \mbT^{(t)} \mbx_i \Vert^{2-q}} \, .
\end{equation}
The last equality here follows from dropping terms from \cref{eq:irls-ugly-surrogate-function} that are constant.\footnote{It is important to understand that we are treating $\mbT^{(t)}$ as a constant. Only terms that depend on $\mbT$ matter for the minimization. We also further simplified by rescaling $Q(\mbT \mid \mbT^{(t)})$ by a factor of $2/q$, which doesn't affect the value at which the minimum is attained.}
Intuitively, at each step we are minimizing a surrogate function $Q(\mbT \mid \mbT^{(t)})$ that upper bounds the true objective function.
This surrogate function is easy to optimize since the minimization is a special case of \cref{eq:irls-weighted-least-squares} with weightings:
\begin{equation}
w_i = \frac{1}{\Vert \mby_i - \mbT^{(t)} \mbx_i \Vert^{2-q}}
\end{equation}
Furthermore, because we showed that the upper bound is tight at $\mbT = \mbT^{(t)}$, we have:
\begin{equation}
f(\mbT^{(t+1)}) \leq Q(\mbT^{(t+1)} \mid \mbT^{(t)}) = \min_{\mbT \in \cG} ~ Q(\mbT \mid \mbT^{(t)}) \leq Q(\mbT^{(t)} \mid \mbT^{(t)}) = f(\mbT^{(t)})
\end{equation}
which shows that the the objective function never increases as the algorithm progresses.

\subsection{Quadratic Metric Repair}
\label{subsec:metric-repair-supplement}

We are given a symmetric estimate of a distance matrix $\widetilde{\mbD} \in \reals^{K \times K}$, which may contain negative entries and triangle inequality violations.
Let $\widetilde{\mbd} \in \reals^{K(K-1)/2}$ be a vector holding the upper triangular entries of $\widetilde{\mbD}$, excluding the diagonal.
Then, consider the following optimization problem:
\begin{equation*}
\begin{aligned}
& \underset{\mbx}{\text{minimize}}
& & \Vert \mbx - \widetilde{\mbd} \Vert^2 \\
& \text{subject to}
& & x_i \geq 0, &&\forall i \in \{1, \hdots , K(K-1)/2\} \\
& & & x_i + x_j - x_k \geq 0, &&\forall (i, j, k) \in \cT_K
\end{aligned}
\end{equation*}
where $\cT_K$ is the set of $3{K \choose 3}$ directed triples of indices corresponding to a triangle inequality constraint.
This is a quadratic program---i.e., a convex optimization problem with a quadratic objective and linear inequality constraints.
To solve this problem, we use the open-source and highly optimized OSQP solver~\citep{osqp}.
The number of inequality constraints grows cubically as $K$ increases, so finding an exact solution may be computationally expensive for analyses of large collections of stochastic neural networks.

\subsection{Reformulating the Bures metric}
\label{subsec:bures-reformulation-supplement}

Here we will prove that \cref{eq:bures-traditional-formulation,eq:bures-variational-formulation} are equivalent.
A similar statement is proved in Theorem 1 of \citet{Bhatia2019}.
Our proof relies on the following lemma.

\begin{lemma}
\label{lemma:svd-lemma}
Let $\mbX \in \mathbb{R}^{n \times n}$ be a matrix with singular value decomposition $\mbX = \mbU \mbS \mbV^\top$. Then $\mbV \mbU^\top = (\mbX^\top \mbX)^{-1/2} \mbX^\top$.
\end{lemma}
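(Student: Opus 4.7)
The plan is a direct computation by substituting the SVD into both sides and simplifying. First I would compute $\mbX^\top \mbX$, which equals $\mbV \mbS \mbU^\top \mbU \mbS \mbV^\top = \mbV \mbS^2 \mbV^\top$ by orthogonality of $\mbU$. This gives a spectral decomposition of $\mbX^\top \mbX$, so the principal square root (and its inverse) are obtained by applying the corresponding scalar function to $\mbS^2$, yielding $(\mbX^\top \mbX)^{-1/2} = \mbV \mbS^{-1} \mbV^\top$.

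Next I would multiply on the right by $\mbX^\top = \mbV \mbS \mbU^\top$ and collapse the middle $\mbV^\top \mbV = \mbI$ and $\mbS^{-1} \mbS = \mbI$ to arrive at $\mbV \mbU^\top$. That is the whole argument.

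The only subtle point is that $(\mbX^\top \mbX)^{-1/2}$ requires $\mbX^\top \mbX$ to be invertible, i.e., $\mbS$ to have strictly positive diagonal entries. This is the condition under which the lemma makes sense, and it is implicit in the way the lemma is invoked (for the Bures metric reformulation, the covariances considered are positive definite, so $\mbS$ is invertible). I would either state this assumption explicitly at the top of the proof, or note that in the degenerate case one should interpret $(\mbX^\top \mbX)^{-1/2}$ as the Moore--Penrose pseudoinverse of $(\mbX^\top \mbX)^{1/2}$, in which case the same cancellation still produces $\mbV \mbU^\top$ (modulo the zero singular directions). There is no real obstacle here; the only thing to be careful about is not conflating $(\mbX^\top \mbX)^{1/2}$ with $\mbX$ itself (they differ by the orthogonal factor $\mbU$), which is precisely the content of the lemma.
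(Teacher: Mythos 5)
Your proof is correct, but it runs in the opposite direction from the paper's. You take the SVD as given and substitute: $\mbX^\top\mbX = \mbV\mbS^2\mbV^\top$ is a spectral decomposition, so $(\mbX^\top\mbX)^{-1/2} = \mbV\mbS^{-1}\mbV^\top$, and right-multiplying by $\mbX^\top = \mbV\mbS\mbU^\top$ collapses to $\mbV\mbU^\top$. The paper instead starts from the polar decomposition $\mbX = \mbQ\mbP$ with $\mbQ = \mbX(\mbX^\top\mbX)^{-1/2}$ orthogonal and $\mbP = (\mbX^\top\mbX)^{1/2}$ positive semidefinite, diagonalizes $\mbP = \mbV\mbS\mbV^\top$ to \emph{construct} an SVD $\mbX = (\mbQ\mbV)\mbS\mbV^\top$, reads off $\mbU\mbV^\top = \mbX(\mbX^\top\mbX)^{-1/2}$, and transposes. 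Your version is more elementary (no polar decomposition needed) and has the minor advantage of applying verbatim to whatever SVD is handed to you, whereas the paper's argument, read strictly, establishes the identity only for the particular SVD it builds (the conclusion still holds in general because $\mbV\mbU^\top$ is the orthogonal polar factor of $\mbX^\top$ and hence independent of the choice of SVD when $\mbX$ is invertible, but the paper leaves that unsaid). The paper's version has the compensating virtue of exhibiting the identity as a statement about the polar factor, which is the conceptual reason it is true. Your explicit remark that $\mbS$ must be invertible for $(\mbX^\top\mbX)^{-1/2}$ to exist is a point the paper glosses over; it is satisfied in the application since there $\mbX = \mbB^{1/2}\mbA^{1/2}$ with $\mbA,\mbB$ positive definite.
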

\begin{proof}
This follows from the construction of the singular value decomposition.
First, recognize that $\mbX$ can be written as the product of an orthogonal $\mbQ$ and symmetric positive semidefinite matrix, $\mbP$, as follows:
\begin{equation}
\mbX = \underbrace{\mbX (\mbX^\top \mbX)^{-1/2}}_{=\mbQ} \underbrace{(\mbX^\top \mbX)^{1/2}}_{=\mbP}
\end{equation}
It is easy to check that $\mbQ^\top \mbQ = \mbQ \mbQ^\top = \mbI$.
Now, since $\mbP$ is positive semidefinite, we have $\mbP = \mbV \mbS \mbV^\top$ for some orthogonal matrix $\mbV$ and nonnegative diagonal matrix $\mbS$.
Defining $\mbU = \mbQ \mbV$, we arrive at the SVD of $\mbX = \mbQ \mbP = \mbU \mbS \mbV^\top$.
Now we can see that:
\begin{equation}
\mbU = \mbQ \mbV = \mbX (\mbX^\top \mbX)^{-1/2} \mbV \quad \Rightarrow \quad \mbU \mbV^\top = \mbX (\mbX^\top \mbX)^{-1/2}
\end{equation}
Taking the transpose of this we prove the lemma.
\end{proof}

Now we proceed to prove the main result.

\begin{proposition}
Let $\mbA$ and $\mbB$ be two positive definite matrices. Then
\begin{equation}
\min_{\mbQ \in \mathcal{O}} \Vert \mbA^{1/2}  - \mbQ \mbB^{1/2} \Vert_F^2 = \textrm{Tr}[\mbA + \mbB - 2 (\mbA^{1/2} \mbB \mbA^{1/2} )^{1/2} ]
\end{equation}
\end{proposition}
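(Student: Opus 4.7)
The plan is to reduce the claim to the standard orthogonal Procrustes problem and identify the optimal value via an SVD argument. First, I would expand the Frobenius norm:
\begin{equation*}
\Vert \mbA^{1/2} - \mbQ \mbB^{1/2} \Vert_F^2 = \Tr[\mbA] + \Tr[\mbQ^\top \mbQ \mbB] - 2 \Tr[\mbQ \mbB^{1/2} \mbA^{1/2}] = \Tr[\mbA] + \Tr[\mbB] - 2 \Tr[\mbQ \mbX],
\end{equation*}
where $\mbX = \mbB^{1/2} \mbA^{1/2}$ and I used $\mbQ^\top \mbQ = \mbI$ together with cyclicity of the trace. Hence minimizing the Frobenius norm over $\mbQ \in \cO$ is equivalent to maximizing $\Tr[\mbQ \mbX]$.

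Next, I would take an SVD $\mbX = \mbU \mbS \mbV^\top$ and rewrite $\Tr[\mbQ \mbX] = \Tr[\mbS \mbV^\top \mbQ \mbU] = \Tr[\mbS \mbR]$, where $\mbR := \mbV^\top \mbQ \mbU$ is orthogonal. Since every diagonal entry of an orthogonal matrix lies in $[-1,1]$ and $\mbS$ is diagonal with nonnegative entries, $\Tr[\mbS \mbR] = \sum_i s_i R_{ii} \le \sum_i s_i = \Tr[\mbS]$, with equality at $\mbR = \mbI$. That is, the maximum is attained at $\mbQ^* = \mbV \mbU^\top$, which coincides with $(\mbX^\top \mbX)^{-1/2} \mbX^\top$ by \Cref{lemma:svd-lemma}.

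The last step is to identify $\Tr[\mbS]$ with the desired trace. Because $\mbA^{1/2}$ and $\mbB^{1/2}$ are symmetric, $\mbX^\top \mbX = \mbA^{1/2} \mbB^{1/2} \mbB^{1/2} \mbA^{1/2} = \mbA^{1/2} \mbB \mbA^{1/2}$. The eigenvalues of this matrix are the squared singular values $s_i^2$ of $\mbX$, so $(\mbA^{1/2}\mbB\mbA^{1/2})^{1/2}$ has eigenvalues $s_i$ and $\Tr[(\mbA^{1/2}\mbB\mbA^{1/2})^{1/2}] = \Tr[\mbS]$. Substituting back yields the claimed identity.

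I expect the only nontrivial step to be the trace maximization over the orthogonal group — all the rest is bookkeeping. The bound $\Tr[\mbS \mbR] \le \Tr[\mbS]$ can be justified either by the elementary argument above (diagonal entries of orthogonal matrices are bounded by one) or by invoking von Neumann's trace inequality; the provided \Cref{lemma:svd-lemma} is essentially tailor-made to package the optimizer in the clean form $\mbV \mbU^\top$, so I would lean on that lemma to make the argument as self-contained as possible.
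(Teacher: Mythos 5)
Your proof is correct and follows essentially the same route as the paper's: both reduce the problem to the orthogonal Procrustes problem for $\mbX = \mbB^{1/2}\mbA^{1/2}$ and read off the answer from its SVD. The only differences are cosmetic — you derive the Procrustes optimum from scratch via the bound $\Tr[\mbS\mbR] \le \Tr[\mbS]$ rather than citing it, and you identify the optimal value directly as $\Tr[\mbS] = \Tr[(\mbX^\top\mbX)^{1/2}]$, which avoids substituting the explicit $\mbQ^*$ back in and shuffling traces as the paper does.
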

\begin{proof}
The minimization over $\mbQ$ is an instance of the well-known orthogonal procrustes problem \citep{Gower2004}.
This has a closed form solution.
Specifically, denoting the singular value decomposition of $\mbB^{1/2} \mbA^{1/2}$ as $\mbU \mbS \mbV^\top$, we have:
\begin{equation}
\mbQ^* = \argmin_{\mbQ \in \cO} \Vert \mbA^{1/2}  - \mbQ \mbB^{1/2} \Vert_F^2 = \mbV \mbU^\top
\end{equation}
Now, by \Cref{lemma:svd-lemma} above, we have:
\begin{equation}
\mbV \mbU^\top = ((\mbB^{1/2} \mbA^{1/2})^\top (\mbB^{1/2} \mbA^{1/2}))^{-1/2} (\mbB^{1/2} \mbA^{1/2})^\top = (\mbA^{1/2} \mbB \mbA^{1/2})^{-1/2} \mbA^{1/2} \mbB^{1/2}
\end{equation}
Plugging this into the original problem, we have:
\begin{align}
\Vert \mbA^{1/2}  - \mbQ^* \mbB^{1/2} \Vert_F^2 &= \textrm{Tr}[ \mbA + \mbB - 2 \mbA^{1/2} \mbQ^* \mbB^{1/2} ] \\
&= \textrm{Tr}[ \mbA + \mbB - 2 \mbA^{1/2} (\mbA^{1/2} \mbB \mbA^{1/2})^{-1/2} \mbA^{1/2} \mbB]
\end{align}
Due to the cyclic trace property, this becomes:
\begin{equation}
\textrm{Tr}[ \mbA + \mbB - 2 (\mbA^{1/2} \mbB \mbA^{1/2})^{-1/2} \mbA^{1/2} \mbB \mbA^{1/2} ] = \textrm{Tr}[ \mbA + \mbB - 2 (\mbA^{1/2} \mbB \mbA^{1/2})^{1/2} ]
\end{equation}
as claimed.
\end{proof}

\end{document}